\definecolor{linkblue}{rgb}{0.1,0.1,0.8}
\newcommand{\assign}{\leftarrow}
\newtheorem{theorem}{Theorem}
\newtheorem{lemma}[theorem]{Lemma}
\newtheorem{corollary}[theorem]{Corollary}
\newcommand{\ignore}[1]{}
\newcommand{\todo}[1]{\textcolor{blue}{TODO: #1}}
\newcommand{\N}{\mathbb{N}}
\newcommand{\R}{\mathbb{R}}
\renewcommand{\epsilon}{\varepsilon}
\newcommand{\eps}{\varepsilon}
\DeclareMathOperator{\Bin}{Bin}
\DeclareMathOperator{\aim}{aim}
\newcommand{\E}{E}
\newcommand{\onemax}{\textsc{OneMax}\xspace}
\newcommand{\OneMax}{\textsc{OneMax}\xspace}
\newcommand{\OM}{\textsc{Om}\xspace}
\newcommand{\oea}{$(1 + 1)$~EA\xspace}
\newcommand{\searchSpace}{\Omega}
\newcommand{\realnum}{\mathbb{R}}
\newcommand{\dInt}{d_{\mathrm{int}}}
\newcommand{\dRing}{d_{\mathrm{ring}}}
\begin{document} 
\title{The Right Mutation Strength\\ for Multi-Valued Decision Variables}
\author{Benjamin Doerr$^1$, Carola Doerr$^2$, Timo K{\"o}tzing$^3$}
\date{
$^1$Laboratoire d'Informatique (LIX), \'Ecole Polytechnique, Paris-Saclay, France\\
$^2$CNRS and LIP6, Sorbonne Universit\'es, UPMC Univ Paris 06, Paris, France\\
$^3$Hasso-Plattner-Institut, Potsdam, Germany
}

\maketitle

{\sloppy
\begin{abstract}
  The most common representation in evolutionary computation are bit strings. 	This is ideal to model binary decision variables, but less useful for variables taking more values. With very little theoretical work existing on how to use evolutionary algorithms for such optimization problems, 
		we study the run time of simple evolutionary algorithms on some OneMax-like functions defined over $\Omega = \{0, 1, \dots, r-1\}^n$. More precisely, we regard a variety of problem classes requesting the component-wise minimization of the distance to an unknown target vector $z \in \Omega$.  
		 
  For such problems we see a crucial difference in how we extend the standard-bit mutation operator to these multi-valued domains. While it is natural to select each position of the solution vector to be changed independently with probability $1/n$, there are various ways to then change such a position. If we change each selected position to a random value different from the original one, we obtain an expected run time of $\Theta(nr \log n)$. If we change each selected position by either $+1$ or $-1$ (random choice), the optimization time reduces to $\Theta(nr + n\log n)$. If we use a random mutation strength $i \in \{0,1,\ldots,r-1\}^n$ with probability inversely proportional to $i$ and change the selected position by either $+i$ or $-i$ (random choice), then the optimization time becomes $\Theta(n \log(r)(\log(n)+\log(r)))$, bringing down the dependence on $r$ from linear to polylogarithmic.
  
  One of our results depends on a new variant of the lower bounding multiplicative drift theorem.
	\end{abstract}

\section{Introduction}\label{sec:introduction}

In evolutionary computation, taking ideas both from computer science and biology, often search and optimization problems are modeled in a way that the solution candidates are fixed-length strings over the alphabet consisting of $0$ and~$1$. In other words, the search space $\Omega$ is chosen to be $\{0,1\}^n$ for some positive integer $n$. Such a representation of solution candidates is very suitable to model binary decision variables. For example, when searching for graph substructures like large cliques, (degree-constrained) spanning trees, or certain matchings, we can use binary decision variables describing whether a vertex or an edge is part of the solution or not. For these reasons, the bit string representation is the most prominent one in evolutionary computation. 

When a problem intrinsically consists of other types of decision variables, the algorithm designer has the choice to either work with a different representation (e.g., permutations in the traveling salesman problem) or to re-model the problem using a bit string representation. For an example for the latter, see, e.g.,~\cite{DoerrJ07gecco}, where the Eulerian cycle problem (asking for a permutation of the edges) was re-modeled as a matching problem. In general, such a re-modeling may not lead to an efficient or a natural approach, and it may be better to work with a representation different from bit strings. The traveling salesman problem is an example for such a situation.

While in this work we shall not deal with the difficulties of treating permutation search spaces in evolutionary computation, we shall try to extend our good understanding of the bit string representation to representations in which the decision variables can take more values than just zero and one. Consequently, we shall work with search spaces $\Omega = \{0, \ldots, r-1\}^n$. Such search spaces are a natural representation when each decision variable can take one out of $r$ values. Examples from the evolutionary computation literature include scheduling $n$ jobs on $r$ machines, which naturally leads to the search space $\{0, \ldots, r-1\}^n$, see Gunia~\cite{Gunia05}. However, also rooted trees lead to this type of representation: Since each vertex different from the root has a unique predecessor in the tree, a rooted tree on $n$ vertices can be represented via an element of $\{0, \ldots, n-1\}^{n-1}$. This was exploited in~\cite{ScharnowTW04} to design evolutionary algorithms for shortest-path problems.

An alternative representation would be to code each value in $\log r$ bits, leading to a search space of $\{0,1\}^{n \log r}$. However, this representation has the weakness that search points with similar fitness can be vastly different (the bit representations $10\ldots 0$ and $01\ldots 1$ code almost the same value, but are complementary); this trap-like behavior can lead to a very poor performance on some \OneMax functions (see Section~\ref{sec:OneMaxFunctions} for a formal defintion).

\subsection{Mutation Operators for Multi-Valued Search Spaces}

A first question, and our main focus in this work, is what mutation operators to use in such multi-valued search spaces. When there is no particular topology in the components $i \in [1..n] := \{1, \ldots, n\}$, that is, in the factors $[0..r-1]$, then the natural analogue of the standard-bit mutation operator is to select each component $i \in [1..n]$ independently and mutate the selected components by changing the current value to a random other value in $[0..r-1]$. This operator was used in~\cite{ScharnowTW04,Gunia05} as well as in the theoretical works~\cite{DoerrJS11,DoerrP12}.

When the decision values $0, 1, \ldots, r-1$ carry more meaning than just denoting alternatives without particular topology, then one may want to respect this in the mutation operator. We shall not discuss the most general set-up of a general distance matrix defined on the values $0, 1, \ldots, r-1$, but assume that they represent linearly ordered alternatives.
 
Given such a linear topology, several other mutation operators suggest itself. We shall always imitate the principle of standard-bit mutation that each component $i \in [1..n]$ is changed independently with probability $1/n$, so the only point of discussion is how such an elementary change looks like. The principle that mutation is a minimalistic change of the individual suggests to alter a selected component randomly by $+1$ or $-1$ (for a precise definition, including also a description of how to treat the boundary cases, see again Section~\ref{sec:preliminaries}). We say that this mutation operator has a \emph{mutation strength} equal to one. Naturally, a mutation strength of one carries the risk of being slow---it takes $r-1$ such elementary mutations to move one component from one boundary value, say $0$, to the other, say $r-1$. 

In this language, the previously discussed mutation operator changing a selected component to a new value chosen uniformly at random can (roughly) be described as having a mutation strength chosen uniformly at random from $[1..r-1]$. While this operator does not have the disadvantage of moving slowly through the search space, it does have the weakness that reaching a particular target is slow, even when already close to it.

Based on these (intuitive, but we shall make them precise later) observations, we propose an elementary mutation that takes a biased random choice of the mutation strength. We give more weight to small steps than the uniform operator, but do allow larger jumps with certain probability. More precisely, in each elementary mutation independently we choose the mutation strength randomly such that a jump of $+j$ or $-j$ occurs with probability inversely proportional to $j$ (and hence with probability $\Theta((j \log r)^{-1})$). This distribution was used in~\cite{Dit-Row-Weg-Woe:j:10} and is called \emph{harmonic distribution}, aiming at overcoming the two individual weaknesses of the two operators discussed before and, as we shall see, this does indeed work.

\subsection{Run time Analysis of Multi-Valued OneMax Functions}\label{sec:OneMaxFunctions}

To gain a more rigorous understanding of the working principles of the different mutations strengths, we conduct a mathematical run time analysis for simple evolutionary algorithms on multi-valued analogues of the OneMax test function. Comparable approaches have been very successful in the past in studying in isolation particular aspects of evolutionary computation, see, e.g.,~\cite{Jansen13}. Also, many observations first made in such simplistic settings have later been confirmed for more complicated algorithms (see, e.g.,~\cite{AugerD11}) or combinatorial optimization problems (see, e.g.,~\cite{NeumannW10}).

On bit strings, the classic OneMax test function is defined by $\OM: \{0,1\}^n \to \R; (x_1, \dots, x_n) \mapsto \sum_{i = 1}^n x_i$. Due to the obvious symmetry, for most evolutionary algorithms it makes no difference whether the target is to maximize or to minimize this function. For several reasons, among them the use of drift analysis, in this work it will be more convenient to always assume that our target is the minimization of the given objective function.

The obvious multi-valued analogue of this OneMax function is $\OM: \{0,1, \dots, r-1\}^n \to \R; x \mapsto \sum_{i=1}^n x_i$, however, a number of other functions can also be seen as multi-valued analogues. For example, we note that in the bit string setting we have $\OM(x) = H(x,(0,\dots,0))$, where $H(x,y):= |\{i \in [1..n] \mid x_i \neq y_i\}|$ denotes the Hamming distance between two bit strings $x$ and $y$. Defining $f_z: \{0,1\}^n\to \R; x \mapsto H(x,z)$ for all $z \in \{0,1\}^n$, we obtain a set of $2^n$ objective functions that all have an isomorphic fitness landscape. Taking this route to define multi-valued analogue of OneMax functions, we obtain the class of functions $f_z: \{0,1,\dots, r-1\}^n \mapsto \R; x \mapsto \sum_{i=1}^n |x_i-z_i|$ for all $z \in \{0,1,\dots,r-1\}^n$, again with $f_{(0,\dots,0)}$ being the OneMax function defined earlier. Note that these objective functions do not all have an isomorphic fitness landscape. The asymmetry with respect to the optimum $z$ can be overcome by replacing the classic distance $|x_i-z_i|$ in the reals by the distance modulo $r$ (ring distance), that is,  $\min\{x_i - (z_i-r), |x_i-z_i|, (z_i+r)-x_i\}$, creating yet another non-isomorphic fitness landscape. All results we show in the following hold for all these objective functions. 

As evolutionary algorithm to optimize these test functions, we study the (1+1) evolutionary algorithm (EA). This is arguably the most simple evolutionary algorithm, however, many results that could first only be shown for the \oea could later be extended to more complicated algorithms, making it an ideal instrument for a first study of a new subject. Naturally, to study mutation operators we prefer mutation-based EAs. For the different ways of setting the mutation strength, we conduct a mathematical run time analysis, that is, we prove bounds on the expected number of iterations the evolutionary algorithm needs to find the optimal solution. This \emph{optimization time} today is one of the most accepted performance measures for evolutionary algorithms.

\subsection{Previous Works and Our Results}

In particular for the situation that $r$ is large, one might be tempted to think that results from continuous optimization can be helpful. So far, we were not successful in this direction. A main difficulty is that in continuous optimization, usually the asymptotic rate of convergence is regarded. Hence, when operating with a fixed $r$ in our setting and re-scaling things into, say, $\{0,\frac 1r,\frac 2r,\dots,1\}^n$, then these results, due to their asymptotic nature, could become less meaningful. For this reason, the only work in the continuous domain that we found slightly resembling ours is by J\"agers\-k\"upper (see~\cite{Jagerskupper08} and the references therein), which regards continuous optimization with an a-priori fixed target precision. However, the fact that J\"agersk\"upper regards approximations with respect to the Euclidean norm (in other words, minimization of the sphere function) makes his results hard to compare to ours, which can be seen as minimization of the $1$-norm.

Coming back to the discrete domain, as said above, the vast majority of theoretical works on evolutionary computation work with a bit string representation. A notable exception is the work on finding shortest path trees (e.g.,~\cite{ScharnowTW04}); however, in this setting we have that the dimension and the number $r$ of values are not independent: one naturally has $r$ equal to the dimension, because each of the $n-1$ non-root vertices has to choose one of the $n-1$ other vertices as predecessor.

Therefore, we see only three previous works that are comparable to ours. The first two regard the optimization of linear functions via the \oea using mutation with uniform strength, that is, resetting a component to a random other value. The main result of~\cite{DoerrJS11} is that the known run time bound of $O(n \log n)$ on linear functions defined on bit strings remains valid for the search space $\{0,1,2\}^n$. This was extended and made more precise in~\cite{DoerrP12}, where for $r$-valued linear functions an upper bound of $(1+o(1))e(r-1)n\ln(n) + O(r^3 n \log\log n)$ was shown together with a $(1+o(1))n(r-1)\ln(n)$ lower bound. 

A third paper considers dynamically changing fitness functions~\cite{Koe-Lis-Wit:c:15}. They also consider OneMax functions with distance modulo $r$, using  $\pm 1$ mutation strength. In this setting the fitness function changed over time and the task was to track it as closely as possible, which the $\pm 1$ mutation strength can successfully do. Note that a seemingly similar work on the optimization of a dynamic variant of the maze function over larger alphabets~\cite{LissovoiW14} is less comparable to our work since there all non-optimal values of a decision variable contribute the same to the fitness function. 

Compared to these works, we only regard the easier static \onemax problem (note though that there are several ways to define multi-valued \onemax functions), but obtain tighter results also for larger values of $r$ and for three different mutation strengths. For the uniform mutation strength, we show a tight and precise $(1+o(1)) e (r-1) n \ln(n)$ run time estimate for all values of $r$ (Section~\ref{sec:uniform}). For the cautious $\pm 1$ mutation strength, the run time becomes $\Theta(n(r+\log n))$, that is, still (mostly) linear in $r$ (Section~\ref{sec:pm1}). The harmonic mutation strength overcomes this slowness and gives a run time of $\Theta(n \log(r)(\log(r) + \log(n)))$, which for most values of $r$ is significantly better than the previous bound (Section~\ref{sec:HarmonicStepSize}). 

All analyses rely on drift methods, for the lower bound for the case of uniform mutation strength we prove a variant of the multiplicative drift lower bound theorem~\cite{Witt13} that does not need the restriction that the process cannot go back to inferior search points (see Section~\ref{sec:lowerBoundDriftTheorem}).

\section{Algorithms and Problems}
\label{sec:preliminaries}

In this section we define the algorithms and problems considered in this paper. We let $[r]:=\{0,1,\ldots,r-1\}$ and $[1..r]:=\{1,2,\ldots,r\}$. For a given search space $\searchSpace$, a fitness function is a function $f: \searchSpace \rightarrow \R$. While a frequently analyzed search space is $\searchSpace = \{0,1\}^n$, we will consider in this paper $\searchSpace = [r]^n$.

We define the following two metrics on $[r]$, called \emph{interval-metric} and \emph{ring-metric}, respectively. The intuition is that the interval metric is the usual metric
induced by the metric on the natural numbers, while the ring metric connects the two endpoints of the interval (and, thus, forms a ring). Formally we have, for all $a,b \in [r]$,
\begin{eqnarray*}
\dInt(a,b) & = & |b-a|;\\
\dRing(a,b) & = & \min \{ |b-a|,|b-a+r|,|b-a-r|\}.
\end{eqnarray*}

We consider different \emph{step operators} $v: [r] \rightarrow [r]$ (possibly randomized). These step operators will later decide the update of a mutation in a given component. Thus we call, for any given $x \in [r]$, $d(x,v(x))$ the \emph{mutation strength}. We consider the following step operators.
\begin{itemize}
	\item The \emph{uniform step} operator chooses a different element from $[r]$ uniformly at random; thus we speak of a \emph{uniform mutation strength}.
	\item The \emph{$\pm 1$} operator chooses to either add or subtract $1$, each with probability $1/2$; this operator has a mutation strength of $1$. 
	\item The \emph{Harmonic} operator makes a jump of size $j \in [r]$ with probability proportional to $1/j$, choosing the direction uniformly at random; we call its mutation strength \emph{harmonic mutation strength}.
\end{itemize}
Note that, in the case of the ring-metric, all steps are implicitly considered with wrap-around. For the interval-metric, we consider all steps that overstep a boundary of the interval as invalid and discard this mutation as infeasible. Note that this somewhat arbitrary choice does not impact the results in this paper.

We consider the algorithms RLS and \oea as given by Algorithms~\ref{alg:rls} and~\ref{alg:ea}. Both algorithms sample an initial search point from $[r]^n$ uniformly at random. They then proceed in rounds, each of which consists of a mutation and a selection step. Throughout the whole optimization process the algorithms maintain a population size of one, and the individual in this population is always the most recently sampled best-so-far solution.
The two algorithms differ only in the \emph{mutation operation}. While the RLS makes a step in exactly one position (chosen uniformly at random), the \oea makes, in each position, a step with probability $1/n$. 

The fitness of the resulting search point $y$ is evaluated and in the \emph{selection step} the parent $x$ is replaced by its offspring $y$ if and only if the fitness of $y$ is at least as good as the one of~$x$. Since we consider minimization problems here, this is the case if $f(y) \leq f(x)$. Since we are interested in expected \emph{run times}, i.e., the expected number of rounds it takes until the algorithm evaluates for the first time a solution of minimal fitness, we do not specify a termination criterion. For the case of $r=2$, the two algorithms are exactly the classic Algorithms RLS and \oea, for all three given step operators (which then degenerate to the flip operator, which flips the given bit).

Note that the algorithms with the considered topologies are unbiased in the general sense of \cite{ABB} (introduced for $\{0,1\}^n$ by Lehre and Witt~\cite{LehreW12} and made specific for several combinatorial search spaces in~\cite{DoerrKLW13tcs}).

\begin{algorithm2e}	\textbf{Initialization:} 
	Sample $x \in [r]^n$ uniformly at random and query $f(x)$\;
  \textbf{Optimization:}
	\For{$t=1,2,3,\ldots$}{
		Choose $i \leq n$ uniformly at random\;
		\For{$j=1, \ldots, n$}{
			\lIf{j = i}{$y_j\assign v(x_j)$}
			\lElse{$y_j \assign x_j$}
		}
		Evaluate $f(y)$\;
		\lIf{$f(y)\leq f(x)$}{$x \assign y$}	
}\caption{RLS minimizing a function $f: {[r]}^n \rightarrow \R$ with a given step operator $v$.}
\label{alg:rls}
\end{algorithm2e}

\begin{algorithm2e}	\textbf{Initialization:} 
	Sample $x \in [r]^n$ uniformly at random and query $f(x)$\;
  \textbf{Optimization:}
	\For{$t=1,2,3,\ldots$}{
		\For{$i=1, \ldots, n$}
			{\label{line:mutEA}With probability $1/n$ set $y_i\assign v(x_i)$ and set $y_i \assign x_i$ otherwise\;}		Evaluate $f(y)$\;
		\lIf{$f(y)\leq f(x)$}{$x \assign y$}	
}\caption{The \oea minimizing a function $f:{[r]}^n \rightarrow \R$ with a given step operator $v$.}
\label{alg:ea}
\end{algorithm2e}
 
Let $d$ be either the interval- or the ring-metric and let $z \in [r]^n$. We can define a straightforward generalization of the \OneMax fitness function as
$$
\sum_{i = 1}^n d(x_i,z_i).
$$
Whenever we refer to an \emph{$r$-valued \OneMax function}, we mean any such function. We refer to $d$ as the \emph{metric of the \OneMax function} and to $z$ as the \emph{target of the \OneMax function}.

\section{Drift Analysis}
\label{sec:drift}

A central tool in many of our proofs is drift analysis, which comprises a number of tools to derive bounds on hitting times from bounds on the expected progress a process makes towards the target. Drift analysis was first used in evolutionary computation by He and Yao~\cite{HeY01} and is now, after a large number of subsequent works, probably the most powerful tool in run time analysis. We briefly collect here the tools that we use.

We phrase the following results in the language that we have some random process, either in the real numbers or in some other set $\Omega$, but then equipped with a potential function $g : \Omega \to \R$. We are mostly interested in the time the process (or its potential) needs to reach $0$.

\emph{Multiplicative drift} is the situation that the progress is proportional to the distance from the target. This quite common situation in run time analysis was first framed into a drift theorem, namely the following one, in~\cite{DoerrJW12}. A more direct proof of this results, that also gives large deviation bounds, was later given in~\cite{DoerrG13algo}.

\begin{theorem}[from \cite{DoerrJW12}]\label{thm:multidrift}
  Let $X^{(0)}, X^{(1)}, \dots$ be a random process taking values in $S := \{0\} \cup [s_{\min},\infty) \subseteq \R$. Assume that $X^{(0)} = s_0$ with probability one. Assume that there is a $\delta>0$ such that for all $t \ge 0$ and all $s \in S$ with $\Pr[X^{(t)} = s] > 0$ we have \[E[X^{(t+1)} | X^{(t)} = s] \le (1-\delta) s.\] Then $T := \min\{t \ge 0 \mid X^{(t)} = 0\}$ satisfies \[E[T] \le \frac{\ln(s_0/s_{\min})+1}{\delta}.\]
\end{theorem}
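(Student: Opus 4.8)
The plan is to reduce this multiplicative statement to the classical additive drift theorem (He and Yao~\cite{HeY01}) by passing to a logarithmically rescaled potential. Concretely, I would set $g(0) := 0$ and $g(x) := 1 + \ln(x/s_{\min})$ for $x \in [s_{\min},\infty)$, and study the process $Y^{(t)} := g(X^{(t)})$. Three features of $g$ on $S$ matter: it is non-negative (for $x \ge s_{\min}$ we have $\ln(x/s_{\min}) \ge 0$); it vanishes exactly at $0$ (since $g(x) \ge 1$ for $x \ge s_{\min}$), so the hitting time $T$ of the theorem equals $\min\{t : Y^{(t)} = 0\}$; and $Y^{(0)} = g(s_0) = \ln(s_0/s_{\min}) + 1$ with probability one, which is precisely the numerator of the claimed bound. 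Hence, once an additive drift of at least $\delta$ for $Y^{(t)}$ is established, the additive drift theorem immediately gives $E[T] \le Y^{(0)}/\delta = (\ln(s_0/s_{\min})+1)/\delta$.

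The core step is thus to verify that, for every $t \ge 0$ and every $s \in S$ with $s \ge s_{\min}$ and $\Pr[X^{(t)}=s]>0$,
\[
  E\bigl[\,g(s) - g(X^{(t+1)}) \,\bigm|\, X^{(t)} = s\,\bigr] \;\ge\; \delta .
\]
I would establish the stronger, deterministic (i.e.\ pointwise, conditioned on $X^{(t)} = s$) inequality $g(s) - g(X^{(t+1)}) \ge 1 - X^{(t+1)}/s$ by distinguishing the two cases allowed by $X^{(t+1)} \in S$. If $X^{(t+1)} = x' \ge s_{\min}$, then $g(s) - g(x') = \ln(s/x')$, and the elementary bound $\ln w \le w - 1$ applied with $w = x'/s$ yields $\ln(s/x') \ge 1 - x'/s$. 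If $X^{(t+1)} = 0$, then $g(s) - g(0) = 1 + \ln(s/s_{\min}) \ge 1 = 1 - 0/s$ because $s \ge s_{\min}$. Taking expectations of this pointwise bound and using the hypothesis $E[X^{(t+1)} \mid X^{(t)} = s] \le (1-\delta)s$ gives $E[g(s) - g(X^{(t+1)}) \mid X^{(t)} = s] \ge 1 - (1-\delta)s/s = \delta$. Since the hypothesis applied at $s = 0$ already forces $X^{(t+1)} = 0$ whenever $X^{(t)} = 0$ (so that $0$ is absorbing and $T > t \iff X^{(t)} > 0$), plugging this drift bound into the additive drift theorem finishes the proof.

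The step I expect to be the main obstacle is the pointwise inequality, for two intertwined reasons. First, one must use the logarithm estimate in the correct direction: bounding $1 + \ln(x'/s_{\min}) \le x'/s_{\min}$ and then invoking Jensen is far too lossy---it fails for $s$ close to $s_{\min}$---whereas writing the potential drop as $\ln(s/X^{(t+1)})$ and applying $\ln w \le w - 1$ to $w = X^{(t+1)}/s$ is exactly tight. Second, the process may jump directly to the target $0$, where $\ln$ is undefined; the additive constant $1$ in the definition of $g$, together with $g(0) := 0$, is precisely what lets the single inequality $g(s) - g(X^{(t+1)}) \ge 1 - X^{(t+1)}/s$ cover both cases with no extra bookkeeping, and it also makes the constant in the final bound come out exactly as stated. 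Everything else---non-negativity, the hitting-time correspondence, and the division by $\delta$---is routine.
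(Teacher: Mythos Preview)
The paper does not prove this theorem; it is quoted verbatim from~\cite{DoerrJW12} and used as a black box throughout. So there is no ``paper's own proof'' to compare against.

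That said, your argument is correct and is essentially the original proof from~\cite{DoerrJW12}: one defines the shifted-logarithmic potential $g(x) = 1 + \ln(x/s_{\min})$ (with $g(0)=0$), verifies the pointwise bound $g(s) - g(x') \ge 1 - x'/s$ via $\ln w \le w-1$, takes expectations to obtain an additive drift of at least $\delta$, and invokes He--Yao. Your handling of the two cases $x' \ge s_{\min}$ and $x' = 0$ is clean, and your remark that the additive $+1$ in $g$ is exactly what absorbs the jump-to-zero case is the right observation. The absorbing-state check at $s=0$ is also correctly noted. Nothing is missing.
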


It is easy to see that the upper bound above cannot immediately be matched with a lower bound of similar order of magnitude. Hence it is no surprise that the only lower bound result for multiplicative drift, the following theorem by Witt~\cite{Witt13}, needs two additional assumptions, namely that the process does not move away from the target and that it does not too often make large jumps towards the target. We shall see later (Theorem~\ref{thm:newdrift}) that the first restriction can be removed under not too strong additional assumptions.

\begin{theorem}[from~\cite{Witt13}]\label{thm:multidriftlower} 
  Let $X^{(t)}, t = 0, 1, \ldots$ be random variables taking values in some finite set $S$ of positive numbers with $\min(S) = 1$. Let $X^{(0)} = s_0$ with probability one. Assume that for all $t \ge 0$, \[\Pr[X^{(t+1)} \le X^{(t)}] = 1.\]
  Let $s_{\aim} \ge 1$. Let $0 < \beta, \delta \le 1$ be such that for all $s > s_{\aim}$ and all $t \ge 0$ with $\Pr[X^{(t)} = s] > 0$, we have   
  \begin{align*}
	  E[&X^{(t)} - X^{(t+1)} \mid X^{(t)} = s] \le \delta s,\\
	  \Pr[&X^{(t)} - X^{(t+1)} \ge \beta s \mid X^{(t)} = s] \le \frac{\beta\delta}{\ln(s)}.
  \end{align*}
  Then $T := \min\{t \ge 0 \mid X^{(t)} \le s_{\aim}\}$ satisfies \[E[T] \ge \frac{\ln(s_0) - \ln(s_{\aim})}{\delta} \frac{1-\beta}{1+\beta}.\]
\end{theorem}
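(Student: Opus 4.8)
The plan is to convert the multiplicative drift into additive drift by means of a logarithmic potential and then to invoke the additive drift theorem in its lower-bound form. First I would dispose of the trivial case: if $s_0 \le s_{\aim}$ then $T = 0$ and the claimed bound is $\le 0$, so from now on assume $s_0 > s_{\aim}$. Define $g : S \to \R_{\ge 0}$ by $g(s) := \max\{0,\, \ln s - \ln s_{\aim}\}$. Then $g(s) = 0$ exactly when $s \le s_{\aim}$, so the hitting time $T$ of the theorem coincides with $\min\{t : g(X^{(t)}) = 0\}$, and $g(X^{(0)}) = \ln s_0 - \ln s_{\aim}$.

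The heart of the proof is the one-step estimate: for every $s > s_{\aim}$ and every $t$ with $\Pr[X^{(t)} = s] > 0$,
\[
  E\!\left[g(X^{(t)}) - g(X^{(t+1)}) \;\middle|\; X^{(t)} = s\right] \;\le\; \delta\,\frac{1+\beta}{1-\beta}.
\]
To prove it, set $D := X^{(t)} - X^{(t+1)}$, which is $\ge 0$ by the monotonicity hypothesis. Since $\ln$ is increasing and truncating at $s_{\aim}$ only shrinks differences, $g(s) - g(X^{(t+1)}) \le \ln s - \ln X^{(t+1)} = -\ln(1 - D/s)$. Now decompose the expectation over the events $\{D < \beta s\}$ and $\{D \ge \beta s\}$. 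On the first event one uses the elementary inequality $-\ln(1-x) \le x/(1-\beta)$ for $x \in [0,\beta]$ (immediate by comparing derivatives), together with $E[D \mid X^{(t)} = s] \le \delta s$ and $D \ge 0$, to bound that contribution by $\delta/(1-\beta)$. On the second event one uses $-\ln(1 - D/s) = \ln(s/X^{(t+1)}) \le \ln s$ (because $X^{(t+1)} \ge \min S = 1$), together with $\Pr[D \ge \beta s \mid X^{(t)} = s] \le \beta\delta/\ln s$, to bound that contribution by $\beta\delta$. Summing and using $\beta^2 \ge 0$ gives $\delta/(1-\beta) + \beta\delta = \delta(1 + \beta - \beta^2)/(1-\beta) \le \delta(1+\beta)/(1-\beta)$.

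With the one-step estimate in hand, applying the additive drift theorem (lower-bound version) to the nonnegative process $(g(X^{(t)}))_{t\ge 0}$, which reaches $0$ at time $T$ and has expected one-step decrease at most $c := \delta(1+\beta)/(1-\beta)$ while still positive, yields $E[T] \ge g(X^{(0)})/c = (\ln s_0 - \ln s_{\aim})/\delta \cdot (1-\beta)/(1+\beta)$, which is the claim. I expect the one-step estimate to be where all the real work, and both hypotheses, sit; the delicate points there are designing the potential so that crossing the threshold $s_{\aim}$ cannot cause an uncontrolled potential drop, and splitting the jump $D$ into a "small" part (controlled by the expected-drift hypothesis, via the $1/(1-\beta)$ logarithm bound) and a "large but rare" part (controlled by the jump-probability hypothesis, via $-\ln(1-D/s) \le \ln s$). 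Tuning the constant in the logarithm inequality is precisely what produces the factor $(1-\beta)/(1+\beta)$ in the final bound.
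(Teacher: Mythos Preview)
The paper does not contain a proof of this statement: Theorem~\ref{thm:multidriftlower} is quoted verbatim from Witt~\cite{Witt13} and used as a black box (it is later applied inside the proof of Theorem~\ref{thm:newdrift}). So there is no ``paper's own proof'' to compare against.

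That said, your argument is correct and is in fact the standard route taken in Witt's original paper: pass to the logarithmic potential $g(s)=\max\{0,\ln s-\ln s_{\aim}\}$, split the one-step decrement according to whether the jump $D$ is below or above $\beta s$, bound the small-jump part via $-\ln(1-x)\le x/(1-\beta)$ together with the drift hypothesis, bound the large-jump part via $\ln(s/X^{(t+1)})\le\ln s$ (using $\min S=1$) together with the jump-probability hypothesis, and then finish with the additive lower-bound drift theorem. The arithmetic $\delta/(1-\beta)+\beta\delta\le\delta(1+\beta)/(1-\beta)$ is exactly what produces the factor $(1-\beta)/(1+\beta)$. One small point worth making explicit when you write it up: the additive lower-bound drift theorem needs the process $g(X^{(t)})$ to be nonnegative and the hitting time to coincide with the first time the potential is~$0$; both hold here because you truncated at $s_{\aim}$, and the case $E[T]=\infty$ is trivial, so the telescoping/optional-stopping step goes through without further integrability assumptions.
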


In situations in which the progress is not proportional to the distance, but only monotonically increasing with it, the following \emph{variable drift} theorem of Johannsen~\cite{Johannsen10} can lead to very good results. Another version of a variable drift theorem can be found in~\cite[Lemma~8.2]{Mit-Row-Can:j:09}.

\begin{theorem}[from~\cite{Johannsen10}]\label{thm:variabledrift}
  Let $X^{(t)}, t = 0, 1, \ldots$ be random variables taking values in some finite set $S$ of non-negative numbers. Assume $0 \in S$ and let $x_{\min} := \min(S \setminus \{0\})$. Let $X^{(0)} = s_0$ with probability one. Let $T := \min\{t \ge 0 \mid X^{(t)} = 0\}$. Suppose that there exists a continuous and monotonically increasing function $h: [x_{\min},s_0] \to \R_{>0}$ such that $E[X^{(t)} - X^{(t+1)} | X^{(t)}] \ge h(X^{(t)})$ holds for all $t < T$. Then \[E[T] \le \frac{x_{\min}}{h(x_{\min})} + \int_{x_{\min}}^{s_0} \frac 1 {h(x)} d x.\]
\end{theorem}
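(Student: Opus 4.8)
\emph{Proof plan.} I would prove this by the standard device of rescaling the potential so that variable drift becomes constant (additive) drift. Define $g:\{0\}\cup[x_{\min},s_0]\to\R_{\ge 0}$ by $g(0):=0$ and
\[
  g(x) := \frac{x_{\min}}{h(x_{\min})} + \int_{x_{\min}}^{x}\frac{dy}{h(y)}\qquad\text{for }x\in[x_{\min},s_0].
\]
Since $h$ is continuous and strictly positive, $1/h$ is continuous, hence integrable, so $g$ is well defined, nonnegative, and strictly increasing; crucially, $g(s_0)$ is exactly the right-hand side of the claimed bound. (We may assume all attainable values lie in $\{0\}\cup[x_{\min},s_0]$, as otherwise $h$ would not be defined on them.) The plan is to show that $Y^{(t)}:=g(X^{(t)})$ has additive drift at least $1$ towards $0$, and then to invoke an additive-drift argument.

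The key step is the elementary inequality
\[
  g(s)-g(u)\ \ge\ \frac{s-u}{h(s)}\qquad\text{for all }s\in S\setminus\{0\}\text{ and all }u\in S,
\]
which I would establish by a short case distinction using only that $h$ is increasing: for $x_{\min}\le u\le s$ it follows from $g(s)-g(u)=\int_u^s dy/h(y)\ge (s-u)/h(s)$; for $u>s$ the same monotonicity gives $g(u)-g(s)\le (u-s)/h(s)$, which is the claim after negation; and for $u=0$ one writes $g(s)=\frac{x_{\min}}{h(x_{\min})}+\int_{x_{\min}}^s dy/h(y)\ge \frac{x_{\min}}{h(s)}+\frac{s-x_{\min}}{h(s)}=\frac{s}{h(s)}$, using $h(x_{\min})\le h(s)$. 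Taking expectations with $u=X^{(t+1)}$ and plugging in the hypothesis $E[X^{(t)}-X^{(t+1)}\mid X^{(t)}]\ge h(X^{(t)})$ then gives $E[Y^{(t)}-Y^{(t+1)}\mid X^{(t)}=s]\ge 1$ for every $s\in S\setminus\{0\}$ with $\Pr[X^{(t)}=s]>0$, i.e. for all $t<T$.

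To finish, I would run an optional-stopping / additive-drift argument: since $X^{(t)}=0$ iff $Y^{(t)}=0$ iff $t\ge T$, the process $Z^{(t)}:=Y^{(t\wedge T)}+(t\wedge T)$ is a supermartingale (constant after time $T$, and $E[Z^{(t+1)}\mid\F_t]\le (Y^{(t)}-1)+(t+1)=Z^{(t)}$ before time $T$ by the drift bound just shown). Hence $E[t\wedge T]\le E[Z^{(t)}]\le E[Z^{(0)}]=g(s_0)$ for every $t$, and monotone convergence (legitimate since $Y\ge 0$ and $S$ is finite) gives $E[T]\le g(s_0)$, which is the assertion. Equivalently one may simply quote the He--Yao additive drift theorem~\cite{HeY01} for $Y^{(t)}$.

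I expect the only mildly delicate point to be the case analysis in the key inequality --- in particular noticing that it remains valid when the process moves \emph{away} from the optimum ($u>s$), which is exactly where monotonicity of $1/h$ (rather than of $h$) is used, and that the constant term $x_{\min}/h(x_{\min})$ in $g$ is precisely what pays for the final jump from the smallest positive state down to $0$. The remaining ingredients --- integrability of $1/h$ and the boundedness needed for the supermartingale step --- are immediate from the continuity and positivity of $h$ and the finiteness of $S$.
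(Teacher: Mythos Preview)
The paper does not prove this theorem; it merely quotes it from Johannsen's thesis~\cite{Johannsen10} as a tool, so there is no in-paper proof to compare against. Your proposal is correct and is in fact essentially the argument Johannsen gives: rescale the potential via $g(x)=\frac{x_{\min}}{h(x_{\min})}+\int_{x_{\min}}^{x}\frac{dy}{h(y)}$ so that the variable drift becomes an additive drift of at least~$1$, and then invoke the He--Yao additive drift theorem~\cite{HeY01} (or, equivalently, the optional-stopping supermartingale argument you sketch). Your case analysis for the key inequality $g(s)-g(u)\ge (s-u)/h(s)$ is clean, and your observation that the additive constant $x_{\min}/h(x_{\min})$ is exactly what is needed for the jump from $x_{\min}$ down to~$0$ is the point that is sometimes glossed over. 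The one implicit assumption you flag --- that all attainable states lie in $\{0\}\cup[x_{\min},s_0]$ so that $h$ (and hence $g$) is defined there --- is indeed part of the intended reading of the theorem statement; in typical applications $s_0=\max S$ and the process is non-increasing or at least stays in~$S$.
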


\section{Mutation Strength Chosen Uniformly at Random}
\label{sec:uniform}

In this section, we analyze the mutation operator with uniform mutation strength, that is, if the mutation operator chooses to change a position, it resets the current value to a different value chosen independently (for each position) and uniformly  at random. We shall prove the same results, tight apart from lower order terms, for all $r$-valued \onemax functions defined in Section~\ref{sec:preliminaries}. Let $f$ be one such objective function and let $z$ be its target. 

When regarding a single component $x_i$ of the solution vector, it seems that replacing a non-optimal $x_i$ by some $y_i$ that is closer to the target, but still different from it, gains us some fitness, but does not lead to a structural advantage (because we still need an elementary mutation that resets this value exactly to the target value $z_i$). This intuitive feeling is correct for RLS and not correct for the \oea.

\subsection{RLS with Uniform Mutation Strength}

For RLS, we turn the above intuition into the potential function $g : [r]^n \to \R; x \mapsto H(x,z) = |\{i \in [1..n] \mid x_i \neq z_i\}|$, the Hamming distance, which counts the number of non-optimal positions in the current solution $x$. We get both an upper and a lower bound on the drift in this potential which allow us to apply multiplicative drift theorems. From that we get the following result.

\ignore{
Consider one iteration of RLS started with a current solution $x \neq z$. Let $y$ be the current solution after one iteration, that is, the value of $x$ after mutation and selection. We observe that $g(y) = g(x) - 1$ if and only if the mutation operator selects a non-optimal position $i$ of $x$ (this happens with probability $g(x)/n$) and then replaces $x_i$ by $z_i$ (this happens with probability $1/(r-1)$). In all other cases, we have $g(y) = g(x)$, though not necessarily $y=x$. Consequently, the expected progress with respect to $g$ in this iteration is
\begin{equation}
  g(x) - E[g(y)] = \frac{g(x)}{n (r-1)}.\label{eq:rlsunif}
\end{equation}
 Let us denote by $T_{x_0}$ the run time of RLS conditional on the initial search point being $x_0$. Then the multiplicative drift theorem (Theorem~\ref{thm:multidrift}) gives an upper bound of \[E[T_{x_0}] \le n (r-1) (\ln(g(x_0))+1).\]

Similarly, the assumptions of the multiplicative drift theorem for lower bounds (Theorem~\ref{thm:multidriftlower}) are satisfied with $s_{\aim} =  \ln n$ and $\beta= 1/ \ln n$. Consequently, assuming $g(x_0) = \exp(\omega(\ln\ln n))$ in the second estimate, we obtain 
\begin{align*}
  E[T_{x_0}] &\ge n (r-1) (\ln(g(x_0)) - \ln\ln n)(1-2/\ln(n)) \\
  & = n (r-1) \ln(g(x_0)) (1-o(1)).
\end{align*}

In the above analysis we used multiplicative drift with the Hamming distance because this in a generic manner gave a very strong result. We also used a drift approach to ease the comparison with the other results we will obtain, also via drift analysis. For this particular problem, also a very problem-specific approach can be used, which gives an even sharper result. Consider a run of RLS starting with a search point $x_0$. For $i \in [0..g(x_0)]$, let $T_i$ denote the first iteration after which $g(x) \le i$, where $T_{g(x_0)} = 0$. Then equation~(\ref{eq:rlsunif})	 shows that $E[T_{i-1} - T_i] = \frac{n (r-1)}{i}$ for all $i \in [1..g(x_0)]$. Consequently, 
\begin{align*}
E[T_{x_0}] &= E\bigg[\sum_{i = 1}^{g(x_0)} (T_{i-1} - T_i)\bigg]\\
	& = \sum_{i = 1}^{g(x_0)} E[T_{i-1} - T_i]\\
	& = n (r-1) \sum_{i=1}^{g(x_0)} \frac 1i\\
	& = n (r-1) H_{g(x_0)},
\end{align*}
 where for all $k \in \N$, $H_k := \sum_{i=1}^k (1/i)$ is the $k$th Harmonic number. The harmonic number is well understood, e.g., we have $H_k = \ln(k) + \gamma + O(1/k)$ with $\gamma = 0.5772...$ being the Euler-Mascheroni constant and we have the non-asymptotic bounds $\ln(k) \le H_k \le \ln(k)+1$, which gives \[n (r-1) \ln(g(x_0)) \le E[T_{x_0}] \le n (r-1) (\ln(g(x_0))+1).\]

By the law of total probability, the expected run time of RLS (with the usual random initialization) is $E[T] = n (r-1) E[H_{g(x_0)}]$. The expected potential of the random initial search point is $E[g(x_0)] = n (1 - 1/r)$. By a Chernoff bound (e.g., Theorem~1.11 in~\cite{Doerr11bookchapter}), we see that $\Pr[|g(x_0) - E[g(x_0)]| \ge \sqrt{n \ln n}] \le 2n^{-2}$. Hence $E[H_{g(x_0)}] = H_{E[g(x_0)]} \pm \Theta(\sqrt{\ln(n)/n}) \pm \Theta(\ln(n)/n^2)$. The first error term could be further reduced by arguments as used in~\cite{DoerrD14}, where for the case $r=2$ a run time bound of $E[T] = n H_{n/2} - 1/2 \pm o(1)$ was shown. We do not detail this idea any further and are content with summarizing the above in the following result, which in particular shows that the Hamming distance very precisely describes the quality of a search point.
}

\begin{theorem}\label{thm:RLSUniformMutation}
  Let $f$ be any $r$-valued \onemax function with target $z \in [r]^n$. Then randomized local search (RLS) with uniform mutation strength has an optimization time $T$ satisfying \[E[T] = n (r-1) (\ln(n)+\Theta(1)).\] If $x_0$ denotes the random initial individual, then for all $x \in [r]^n$ we have \[E[T|x_0 = x] = n (r-1) H_{H(x,z)},\]
	where, for any positive integer $k$, we let $H_k:=\sum_{j=1}^k{1/j}$ denote the $k$-th Harmonic number.
\end{theorem}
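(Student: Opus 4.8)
The plan is to track the Hamming-distance potential $g(x) = H(x,z) = |\{i \mid x_i \neq z_i\}|$ and compute its drift exactly under one step of RLS. Fix a current search point $x \neq z$ and let $y$ be the result of one RLS iteration (mutation followed by selection). Since RLS touches exactly one position $i$, chosen uniformly at random, and since selection only accepts when $f(y) \le f(x)$, I first observe that $g$ never increases: if the touched position $i$ already equals $z_i$, then mutating it strictly worsens that coordinate's contribution to $f$ (regardless of whether $d$ is the interval- or ring-metric, since $z_i$ is the unique minimizer of $d(\cdot, z_i)$), so the offspring is rejected and $g(y) = g(x)$. If the touched position is non-optimal ($x_i \neq z_i$), then $g$ decreases by exactly $1$ precisely when the uniform step operator resets $x_i$ to $z_i$, which happens with probability $1/(r-1)$; in every other case the new coordinate value is still different from $z_i$ (possibly closer, possibly rejected by selection — either way $g$ is unchanged). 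Hence
\[
  g(x) - E[g(y) \mid x] = \frac{g(x)}{n} \cdot \frac{1}{r-1} = \frac{g(x)}{n(r-1)}.
\]

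From this exact drift identity both halves of the theorem follow. For the conditional statement, I would argue via the standard "layered hitting time" decomposition: starting from $x_0 = x$ with $k := H(x,z)$, let $T_i$ be the first time $g \le i$. Because $g$ decreases only in unit steps and the probability of a decrease when $g = i$ is exactly $i/(n(r-1))$ (independent of everything else, since the relevant event — touch a non-optimal coordinate and reset it to target — has this probability whenever $g(x) = i$), the waiting time $T_{i-1} - T_i$ is geometric with success probability $i/(n(r-1))$, so $E[T_{i-1} - T_i] = n(r-1)/i$. Summing over $i = 1, \dots, k$ and using linearity of expectation gives $E[T \mid x_0 = x] = n(r-1) \sum_{i=1}^k 1/i = n(r-1) H_k$, as claimed. (Alternatively, multiplicative drift, Theorem~\ref{thm:multidrift}, together with the lower-bound version, Theorem~\ref{thm:multidriftlower}, applied with $s_{\aim}$ and $\beta$ suitably small, already yields matching bounds up to the $\Theta(1)$ additive term; but the direct geometric argument is cleaner and gives the exact harmonic-number expression.)

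For the unconditional bound, I take the expectation over the random initialization: $E[T] = n(r-1)\, E[H_{g(x_0)}]$. Under uniform initialization each coordinate is non-optimal independently with probability $1 - 1/r$, so $g(x_0) \sim \Bin(n, 1-1/r)$ with mean $n(1-1/r)$. A Chernoff bound shows $g(x_0)$ is concentrated within $O(\sqrt{n \log n})$ of its mean with probability $1 - o(1)$, and since $H_k = \ln k + \Theta(1)$ uniformly and $H_k = O(\log n)$ always, one gets $E[H_{g(x_0)}] = \ln(n) + \Theta(1)$ (the contribution of the low-probability tail event, where $g(x_0)$ could be as small as $0$, is absorbed into the $\Theta(1)$). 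Thus $E[T] = n(r-1)(\ln n + \Theta(1))$.

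The only genuinely delicate point is verifying that the drift identity holds with the claimed probability $1/(r-1)$ exactly — i.e., that no "accidental" decrease of $g$ can occur through a coordinate becoming optimal in any other way. This is immediate because RLS changes only one coordinate per step, so $g$ can change by at most $1$, and the sole event decreasing it is "touch a non-optimal coordinate and map it to $z_i$." One should also note explicitly that for the interval-metric version the possibility of an infeasible (boundary-overstepping) mutation only makes it \emph{less} likely to move, but the event of resetting a non-optimal $x_i$ to $z_i$ is always feasible (it never oversteps), so its probability remains exactly $1/(r-1)$ conditioned on that coordinate being selected; this is why all the $r$-valued \onemax variants give identical run times. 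I expect the argument to be short, with essentially all the content in the one-line drift computation.
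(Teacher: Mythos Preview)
Your proposal is correct and follows essentially the same approach as the paper: the same Hamming-distance potential, the same exact drift computation, the same geometric waiting-time decomposition yielding $n(r-1)H_{H(x,z)}$, and the same Chernoff-based averaging over the random start (the paper additionally presents the multiplicative-drift route first, which you mention as an alternative). One small remark: your final paragraph about boundary-overstepping mutations is unnecessary here, since the \emph{uniform} step operator simply resamples a different value in $[r]$ and has no notion of a step that could overstep; that caveat is only relevant for the $\pm 1$ and harmonic operators.
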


\begin{proof}
Consider one iteration of RLS started with a current solution $x \neq z$. Let $y$ be the current solution after one iteration, that is, the value of $x$ after mutation and selection. We observe that $g(y) = g(x) - 1$ if and only if the mutation operator selects a non-optimal position $i$ of $x$ (this happens with probability $g(x)/n$) and then replaces $x_i$ by $z_i$ (this happens with probability $1/(r-1)$). In all other cases, we have $g(y) = g(x)$, though not necessarily $y=x$. Consequently, the expected progress with respect to $g$ in this iteration is
\begin{equation}
  g(x) - E[g(y)] = \frac{g(x)}{n (r-1)}.\label{eq:rlsunif}
\end{equation}
 Let us denote by $T_{x_0}$ the run time of RLS conditional on the initial search point being $x_0$. Then the multiplicative drift theorem (Theorem~\ref{thm:multidrift}) gives an upper bound of \[E[T_{x_0}] \le n (r-1) (\ln(g(x_0))+1).\]

Similarly, the assumptions of the multiplicative drift theorem for lower bounds (Theorem~\ref{thm:multidriftlower}) are satisfied with $s_{\aim} =  \ln n$ and $\beta= 1/ \ln n$. Consequently, assuming $g(x_0) = \exp(\omega(\ln\ln n))$ in the second estimate, we obtain 
\begin{align*}
  E[T_{x_0}] &\ge n (r-1) (\ln(g(x_0)) - \ln\ln n)(1-2/\ln(n)) \\
  & = n (r-1) \ln(g(x_0)) (1-o(1)).
\end{align*}

In the above analysis we used multiplicative drift with the Hamming distance because this in a generic manner gave a very strong result. We also used a drift approach to ease the comparison with the other results we will obtain, also via drift analysis. For this particular problem, also a very problem-specific approach can be used, which gives an even sharper result. Consider a run of RLS starting with a search point $x_0$. For $i \in [0..g(x_0)]$, let $T_i$ denote the first iteration after which $g(x) \le i$, where $T_{g(x_0)} = 0$. Then equation~(\ref{eq:rlsunif})	 shows that $E[T_{i-1} - T_i] = \frac{n (r-1)}{i}$ for all $i \in [1..g(x_0)]$. Consequently, 
\begin{align*}
E[T_{x_0}] &= E\bigg[\sum_{i = 1}^{g(x_0)} (T_{i-1} - T_i)\bigg]\\
	& = \sum_{i = 1}^{g(x_0)} E[T_{i-1} - T_i]\\
	& = n (r-1) \sum_{i=1}^{g(x_0)} \frac 1i\\
	& = n (r-1) H_{g(x_0)},
\end{align*}
 where for all $k \in \N$, $H_k := \sum_{i=1}^k (1/i)$ is the $k$th Harmonic number. The harmonic number is well understood, e.g., we have $H_k = \ln(k) + \gamma + O(1/k)$ with $\gamma = 0.5772...$ being the Euler-Mascheroni constant and we have the non-asymptotic bounds $\ln(k) \le H_k \le \ln(k)+1$, which gives \[n (r-1) \ln(g(x_0)) \le E[T_{x_0}] \le n (r-1) (\ln(g(x_0))+1).\]

By the law of total probability, the expected run time of RLS (with the usual random initialization) is $E[T] = n (r-1) E[H_{g(x_0)}]$. The expected potential of the random initial search point is $E[g(x_0)] = n (1 - 1/r)$. By a Chernoff bound (e.g., Theorem~1.11 in~\cite{Doerr11bookchapter}), we see that $\Pr[|g(x_0) - E[g(x_0)]| \ge \sqrt{n \ln n}] \le 2n^{-2}$. Hence $E[H_{g(x_0)}] = H_{E[g(x_0)]} \pm \Theta(\sqrt{\ln(n)/n}) \pm \Theta(\ln(n)/n^2)$. The first error term could be further reduced by arguments as used in~\cite{DoerrD14}, where for the case $r=2$ a run time bound of $E[T] = n H_{n/2} - 1/2 \pm o(1)$ was shown. We do not detail this idea any further and are content with summarizing the above in the following result, which in particular shows that the Hamming distance very precisely describes the quality of a search point.
\end{proof}

\subsection{The (1+1) EA with Uniform Mutation Strength}

We now consider the same run time analysis problem for the \oea, that is, instead of selecting a single random entry of the solution vector and applying an elementary mutation to it, we select each entry independently with probability $1/n$ and mutate all selected entries. Our main result is the following.
\begin{theorem}\label{thm:eauniform}
  For any $r$-valued \onemax function, the \oea with uniform mutation strength has an expected optimization time of \[E[T] = e (r-1) n \ln(n) + o(n r \log n).\]
\end{theorem}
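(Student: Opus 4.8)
The plan is to prove matching upper and lower bounds of $e(r-1)n\ln n(1+o(1))$ by combining a careful potential-function argument with the drift theorems from Section~\ref{sec:drift}. The key structural observation, contrasting with RLS, is that for the \oea a non-optimal position $i$ can be improved by moving $x_i$ closer to $z_i$ even without hitting $z_i$ exactly, because in a later iteration several positions may be mutated simultaneously. Hence the Hamming distance $H(x,z)$ is \emph{not} the right potential; instead I would work directly with a weighted potential that reflects how hard it is to finish off a position. Concretely, I would let $g(x) = \sum_{i=1}^n w(d(x_i,z_i))$ where $w(0)=0$ and $w(k)$ for $k\ge 1$ is chosen so that the one-step drift is (nearly) multiplicative; the natural guess, mirroring the RLS Harmonic-number computation, is $w(k) = H_k$ (the $k$-th harmonic number), since a position at distance $k$ needs, heuristically, $H_k$ ``effective'' improving steps to reach $0$ when improvements to any closer value are free.

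For the \textbf{upper bound}, I would show that in one iteration of the \oea, conditioned on the current individual $x$, the expected decrease of $g$ is at least $(1-o(1)) g(x)/(e(r-1)n)$. The dominant contribution comes from iterations that mutate exactly one position $i$ and reset it to some value strictly closer to $z_i$: position $i$ is selected and no other position is (probability $\frac1n(1-\frac1n)^{n-1}\ge \frac{1}{en}$), and then the uniform step lands on one of the $d(x_i,z_i)$ values between $x_i$ and $z_i$ with probability at least $d(x_i,z_i)/(r-1)$; one must check that such a move is always accepted (it strictly decreases $f$ hence also does not increase $g$ since $g$ is monotone in each coordinate distance) and that the expected $w$-gain from position $i$ is $\Theta(\ln d(x_i,z_i)/(r-1)) = \Theta(w(d(x_i,z_i))/(r-1))$. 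Summing over $i$ gives multiplicative drift with $\delta = (1-o(1))/(e(r-1)n)$; since $g(x_0)\le n H_r \le n(\ln r + 1)$ and $g$ is integer-spaced away from $0$ only in the sense $g\ge w(1)=1$, Theorem~\ref{thm:multidrift} yields $E[T]\le e(r-1)n(\ln(nH_r)+1)(1+o(1))$. To get the clean leading constant $e(r-1)n\ln n$ rather than $e(r-1)n\ln(nr)$, I would need to argue that the ``$\ln r$'' part of the initial potential is absorbed quickly: e.g. split the run at the first time all positions are at distance $O(\mathrm{polylog})$, bound that phase separately (the multiplicative drift there costs only $O((r-1)n\log\log\cdot)=o((r-1)n\log n)$ time), and apply sharper drift on the remaining phase; also handle the negative second-order terms (two positions mutating at once, overshoot past $z_i$) which only help or are lower-order.

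For the \textbf{lower bound} I would use the new variant of the multiplicative drift lower bound theorem, Theorem~\ref{thm:newdrift}, applied to the same potential $g$ (this is presumably exactly why the authors needed a lower-bound drift theorem \emph{without} the no-backward-steps hypothesis, since under the \oea the potential \emph{can} increase when a favorable fitness move nonetheless rearranges coordinate distances unfavorably for $g$). I would verify: (i) the expected one-step decrease of $g$ is at most $(1+o(1))g(x)/(e(r-1)n)$ — here the single-position analysis again dominates, and I must bound the contribution of multi-position mutations by $o(g(x)/((r-1)n))$ using that such events have probability $O(1/n^2)$; (ii) the probability of a large relative jump in $g$ is suitably small, i.e. $\Pr[\Delta g \ge \beta g]\le \beta\delta/\ln(\cdot)$ for appropriate $\beta=o(1)$, which follows because a jump of relative size $\beta$ essentially requires $\Omega(\beta \cdot \#\text{non-optimal positions})$ positions to be simultaneously mutated favorably; plus whatever mild extra hypothesis Theorem~\ref{thm:newdrift} demands in place of monotonicity. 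Starting from $g(x_0) = \Theta(n\log r)$ with high probability (Chernoff on the $\Theta(n)$ non-optimal coordinates, each contributing $w$-value $\Theta(\log r)$ in expectation), this gives $E[T]\ge (1-o(1)) e(r-1)n \ln(g(x_0)) \ge (1-o(1)) e(r-1)n\ln n$.

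The \textbf{main obstacle} I anticipate is twofold and both parts concern matching the constant $e(r-1)$ exactly rather than up to a constant. First, pinning down that $w=H_k$ (or whatever $w$ I choose) makes the drift genuinely $(1\pm o(1))$-multiplicative in \emph{both} directions requires a delicate estimate of $\sum_{k'<k}(w(k)-w(k'))$ type sums and a proof that accepted moves never increase $g$ by more than a controlled amount — the interaction between the fitness $f=\sum d(x_i,z_i)$ governing acceptance and the potential $g=\sum w(d(x_i,z_i))$ governing progress is the crux. Second, applying Theorem~\ref{thm:newdrift} correctly: I will need to check its precise hypotheses (the excerpt only forward-references it), and in particular control the probability and size of upward movements of $g$, which is exactly the phenomenon the standard Theorem~\ref{thm:multidriftlower} forbids outright. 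If that theorem turns out to require, say, bounded step sizes or a sub-exponential tail, verifying it for the \oea's potential is where the real work lies; everything else is the by-now-standard drift bookkeeping already rehearsed in the proof of Theorem~\ref{thm:RLSUniformMutation}.
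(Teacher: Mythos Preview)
Your upper- and lower-bound strategies both diverge from the paper's, and the lower bound contains a real error.

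For the upper bound the paper does \emph{not} introduce a weighted potential: it takes the fitness $f$ itself and applies Johannsen's variable drift theorem (Theorem~\ref{thm:variabledrift}). The key estimate is $f(x)-E[f(y)]\ge \frac{1}{2e(r-1)n}\sum_i d_i(d_i+1)$, which is simultaneously $\ge \frac{f(x)}{e(r-1)n}$ and $\ge\frac{f(x)^2}{2e(r-1)n^2}$; the piecewise $h$ built from these two bounds, fed into the variable-drift integral, gives $e(r-1)n\ln n+O(rn)$ directly, with no phase splitting. Your potential $g(x)=\sum_i H_{d_i}$ is a legitimate alternative, but your per-position drift computation is off: using the identity $\sum_{j=0}^{k-1}H_j=kH_k-k$ one finds that the expected $g$-gain from a single accepted mutation of a position at distance $d_i$ is exactly $d_i/(r-1)$, not $\Theta(H_{d_i}/(r-1))$. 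Since $d_i\ge H_{d_i}$, the inequality you need for the \emph{upper} bound survives.

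The same miscalculation is fatal for the lower bound. The single-position drift of your $g$ equals $\frac{f(x)}{en(r-1)}$, and $f(x)/g(x)$ can be as large as $\Theta(r/\log r)$ (a single coordinate at distance $r-1$ already gives this). Thus the hypothesis $E[(s-g(X^{(t+1)}))_+]\le(1+o(1))\frac{s}{e(r-1)n}$ that Theorem~\ref{thm:newdrift} requires fails whenever any $d_i>1$; the over-drift comes from \emph{single}-position moves, so bounding multi-position events cannot repair it. The paper instead uses the Hamming distance $H(x,z)$ as the lower-bound potential: its single-position drift is exactly $\frac{H(x,z)}{e(r-1)n}$ (the only way to reduce it is to land on $z_i$ exactly), so the multiplicative rate is tight by construction. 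The price is that $H(x,z)$ can increase under accepted multi-position moves---precisely the reason Theorem~\ref{thm:newdrift} is needed---and that multi-position events add a second-order $\Theta(s^2/((r-1)^2n^2))$ term to the forward drift (Lemma~\ref{lem:delta}); this is absorbed by applying Corollary~\ref{cor:newdrift} with $\tilde s_0=n/\ln(n)^3$, on which range the second-order term is $o(1/(e(r-1)n))$.
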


As we will see, since several entries can be changed in one mutation step, the optimization process now significantly differs from the RLS process. This has two important consequences. First, while for the RLS process the Hamming distance of the current search point precisely determined the expected remaining optimization time, this is not true anymore for the \oea. This can be seen (with some mild calculations which we omit here) from the search points $x = (r,0,\dots,0)$ and $y = (1,0,\dots,0)$ and the fitness function $f$ defined by $f(x) = \sum_{i=1}^n x_0$. 

The second, worse, consequence is that the Hamming distance does not lead to a positive drift from each search point. Consider again $x = (r,0,\dots,0)$ and $f$ as above. Denote by $x'$ the search point after one mutation-selection cycle started with $x$. Let $g$ be the Hamming distance to the optimum $x^* = (0,\dots,0)$ of $f$. Then for $r \ge 5$, the drift from the search point $x$ satisfies $E[g(x) - g(x')] \le -(1\pm o(1)) \frac{r-4}{2e (r-1) n} < 0$. Indeed, we have $g(x')=0$, that is, $g(x)-g(x')=1$, with probability $(1-1/n)^{n-1}(1/n)(1/(r-1)) = (1\pm o(1)) \frac{1}{e (r-1) n}$. This is the only event that gives a positive drift. On the other hand, with probability at least $(1-1/n)^{n-2}(n-1)(1/n^2)(1+2+\dots+(r-2))/(r-1)^2 = (1\pm o(1)) \frac{r-2}{2e (r-1) n}$, the mutation operator touches exactly the first and one other entry of $x$ and does so in a way that the first entry does not become zero and the second entry remains small enough for $x'$ to be accepted. This event leads to a drift of $-1$, showing the claim.

For these reasons, we resort to the actual fitness as potential function in our upper bound proof. It is clear that the fitness also is not a perfect measure for the remaining optimization time (compare, e.g., the search points $(2,0,\dots,0)$ and $(1,1,0,\dots,0)$), but naturally we have a positive drift from each non-optimal search point, which we shall exploit via the variable drift theorem. For the lower bound, a worsening of the Hamming distance in the optimization process is less of a problem, since we only need an upper bound for the drift. Hence for the lower bound, we can use multiplicative drift with $g$ again. However, since the process may move backwards occasionally, we cannot apply Witt's lower bound drift theorem (Theorem~\ref{thm:multidriftlower}), but have to prove a variant of it that does not require that the process only moves forward. This lower bound theorem for multiplicative drift might be of interest beyond this work. 

\subsubsection{An Upper Bound for the Run Time}

\begin{theorem}\label{thm:eauniformupper}
  For any $r$-valued \onemax function $f$, the \oea with uniform mutation strength has an expected optimization time of 
\begin{align*}
E[T] & \le e (r-1) n \ln(n) + (2 + \ln(2)) e(r-1)n\\
 & = e (r-1) n \ln(n) + O(rn).
\end{align*}
\end{theorem}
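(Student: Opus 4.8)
The plan is to apply the variable drift theorem (Theorem~\ref{thm:variabledrift}) with potential function $g = f$, the fitness itself, which takes values in a finite set $\{0,1,\ldots,D\} \subseteq \R_{\ge 0}$ with $x_{\min}=1$ and $D \le n(r-1)$ (interval metric) or $D \le nr/2$ (ring metric). The heart of the argument is a one-step drift bound: writing $x'$ for the individual after one mutation--selection step, I claim that for every non-optimal $x$,
\[
E[g(x)-g(x') \mid x] \;\ge\; \frac{1}{e(r-1)n}\,\max\!\Big\{\,g(x),\ \frac{g(x)^2}{2n}\,\Big\} \;=:\; h(g(x)).
\]

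To prove the drift bound I only count iterations that mutate exactly one coordinate. Fix a coordinate $i$ with $d_i := d(x_i,z_i) > 0$. The probability that $i$ is the unique mutated coordinate is $\tfrac1n(1-\tfrac1n)^{n-1} \ge \tfrac1{en}$, and conditioned on this event $v(x_i)$ is uniform on $[r]\setminus\{x_i\}$. Among these $r-1$ values, exactly $d_i$ of them move ``toward'' the target (namely $z_i$ and the $d_i-1$ points strictly between $x_i$ and $z_i$ on the relevant arc); this set is well defined and has size $d_i$ for both the interval and the ring metric and for every position of $z_i$, and the one at coordinate-distance $k$ from $z_i$ decreases the coordinate's contribution by $d_i-k$ for $k=0,\ldots,d_i-1$. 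Any single-coordinate mutation that would increase the total fitness is rejected, so $g$ never increases and such a mutation contributes $0$, not a negative amount. Hence the conditional expected decrease of $g$ is at least $\tfrac1{r-1}\sum_{k=0}^{d_i-1}(d_i-k) = \tfrac{d_i(d_i+1)}{2(r-1)}$. Since the events ``only coordinate $i$ is mutated'' are disjoint over $i$, summing gives
\[
E[g(x)-g(x')\mid x] \;\ge\; \frac{1}{en}\cdot\frac{1}{2(r-1)}\sum_{i:\,d_i>0} d_i(d_i+1) \;=\; \frac{1}{2e(r-1)n}\Big(\textstyle\sum_i d_i^2 + \sum_i d_i\Big).
\]
The claimed form of $h$ now follows from two elementary estimates of $\sum_i d_i^2$: each nonzero $d_i$ is at least $1$, so $\sum_i d_i^2 \ge \sum_i d_i = g(x)$, yielding the ``$g(x)$'' term; and at most $n$ of the $d_i$ are nonzero, so Cauchy--Schwarz gives $\sum_i d_i^2 \ge g(x)^2/n$, yielding the ``$g(x)^2/(2n)$'' term.

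The function $h$ is continuous and monotonically increasing on $[1,s_0]$, equal to $\tfrac{v}{e(r-1)n}$ for $v\le 2n$ and to $\tfrac{v^2}{2e(r-1)n^2}$ for $v\ge 2n$. Applying Theorem~\ref{thm:variabledrift} conditional on the (then deterministic) value $s_0 = g(x_0)$, we get $\tfrac{x_{\min}}{h(x_{\min})} = \tfrac1{h(1)} = e(r-1)n$ and, in all cases,
\[
\int_1^{s_0}\frac{dv}{h(v)} \;\le\; \int_1^{2n}\frac{e(r-1)n}{v}\,dv \;+\; \int_{2n}^{\infty}\frac{2e(r-1)n^2}{v^2}\,dv \;=\; e(r-1)n\ln(2n) + e(r-1)n .
\]
Therefore $E[T \mid x_0] \le e(r-1)n\ln(2n) + 2e(r-1)n = e(r-1)n\ln n + (2+\ln 2)e(r-1)n$ for every $x_0$, and taking the expectation over the random initialization yields the theorem.

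The main obstacle is the drift estimate, and in particular the two points that keep it clean and free of any $\ln r$ factor: restricting to the ``direct'' moves toward $z_i$, so that the estimate holds uniformly over the target $z$ and over both metrics (the overshooting moves also help, but only when there is enough room near the boundary, so discarding them is the safe choice); and realizing that one must retain \emph{both} lower bounds on $\sum_i d_i^2$ and feed their maximum into the variable drift theorem --- using only the $\Theta(g(x)^2/n)$ bound would make the near-optimum phase cost $\Theta(r n^2)$, while using only the multiplicative $\Theta(g(x))$ bound would reintroduce a spurious $\ln s_0 = \ln(n(r-1))$ term. Everything after the drift bound is routine calculus.
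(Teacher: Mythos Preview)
Your proof is correct and follows essentially the same route as the paper: use the fitness itself as potential, lower-bound the one-step drift by restricting to single-coordinate mutations to obtain $E[f(x)-f(y)] \ge \tfrac{1}{2e(r-1)n}\sum_i d_i(d_i+1)$, derive from this the two-regime bound $h(s)=\tfrac{1}{e(r-1)n}\max\{s,\,s^2/(2n)\}$, and feed $h$ into the variable drift theorem with the same integral computation. The only cosmetic differences are that the paper invokes convexity of $d\mapsto d^2$ where you write Cauchy--Schwarz, and that you extend the upper integral to $\infty$ rather than stopping at $X_0$; neither changes the argument or the constant.
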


\begin{proof}
Let $z$ be the optimum of $f$. Then $f$ can be written as $f(x) = \sum_{i=1}^n d(x_i,z_i)$, where $d$ is one of the distance measures on $[r]$ that were described in Section~\ref{sec:preliminaries}. Let $x$ be a fixed search point and $y$ be the result of applying one mutation and selection step to $x$. We use the short-hand $d_i := d(x_i,z_i)$. We first show that 
\begin{equation}\label{eq:disquare}
  \Delta := f(x) - E[f(y)] \ge \frac{1}{2e(r-1)n} \sum_{i=1}^n d_i (d_i+1).
\end{equation}
 Indeed, $f(x) - f(y)$ is always non-negative. Consequently, it suffices to point out events that lead to the claimed drift. With probability $(1-(1/n))^{n-1} \ge (1/e)$, the mutation operator changes exactly one position of $x$. This position then is uniformly distributed in $[1..n]$. Conditional on this position being $i$, we have $\Delta \ge \sum_{\delta = 1}^{d_i} \delta/(r-1) = \frac{d_i (d_i+1)}{2(r-1)}$, where the first inequality uses the fact that all our fitness functions are of the type that if there is a value $x_i \in [r]$ with $d(x_i,z_i) = k$, then for each $j \in [0..k-1]$ there is at least one value $y_i \in [r]$ such that $d(y_i,z_i) = j$. This shows~(\ref{eq:disquare}). 
 
For any $d \ge 1$, we have $d(d+1) \ge 2d$ and $d(d+1) \ge d^2$. Also, the mapping $d \mapsto d^2$ is convex. Consequently, we have $\Delta \ge \frac{1}{2e(r-1)n^2} f(x)^2$ and $\Delta \ge \frac 1 {e(r-1)n} f(x)$, that is, $\Delta \ge \max\{\frac{1}{2e(r-1)n^2} f(x)^2,  \frac 1 {e(r-1)n} f(x)\}$. To this drift expression, we apply Johannsen's~\cite{Johannsen10} variable drift theorem (Theorem~\ref{thm:variabledrift}). Let $S = [0..(r-1)n]$. Let $h: \R_{> 0} \to \R_{> 0}$ be defined by $h(s) = \frac{1}{2e(r-1)n^2}s^2$ for $s \ge 2 n$ and $h(s) = \frac 1 {e(r-1)n} s$ for $s < 2 n$. Then $h$ is a continuous increasing function satisfying $\Delta \ge h(f(x))$. Consider the process $X_0, X_1, \ldots$ with $X_t$ describing the fitness after the $t$th iteration. Given that we start with a fitness of $X_0$, Johannsen's drift theorem gives 
\begin{align*}
& E[T] \le \frac 1 {h(1)} + \int_1^{X_0} \frac 1 {h(s)} ds\\
 &  = e(r-1)n + \int_{2 n}^{X_0} 2e(r-1)\frac{n^2}{s^{2}} ds + \int_{1}^{2 n} e(r-1)\frac{n}{s} ds \\
 & \le e(r\!-\!1)n + 2e(r\!-\!1)n^2\!\left(\frac{1}{2 n} \!-\! \frac{1}{X_0}\right) + e(r\!-\!1)n \ln(2 n) \\
 & \le e(r-1)n \ln(n) + (1+ 1 +  \ln(2)) e(r-1)n. \qedhere
\end{align*}
\end{proof}

We may remark that the drift estimate above is pessimistic in that it applies to all $r$-valued \onemax functions. For an $r$-valued \onemax function using the ring metric or one having the optimum close to $(r/2,\dots,r/2)$, we typically have two different bit values in each positive distance from $z_i$. In this case, the drift stemming from exactly position $i$ being selected for mutation is $\Delta \ge d_i/(r-1) + \sum_{\delta = 1}^{d_i-1} 2\delta/(r-1) = \frac{d_i^2}{r-1}$, that is, nearly twice the value we computed above. The fact that in the following subsection we prove a lower bound matching the above upper bound for all $r$-valued \onemax functions shows that this, almost twice as high, drift has an insignificant influence on the run time. 

\subsubsection{A Lower Bound for the Run Time}\label{sec:lowerBoundDriftTheorem}

In this section, we write $(q)_+ := \max\{q,0\}$ for any $q \in \R$. We aim at proving a lower bound, again via drift analysis, that is, via transforming an upper bound on the expected progress (with respect to a suitable potential function) into a lower bound on the expected run time. Since we only need an upper bound on the progress, we can again (as in the RLS analysis) work with the Hamming distance $g(x) = H(x,z)$ to the optimum $z$ as potential and, in the upper estimate of the drift, ignore the fact that this potential may increase. The advantage of working with the Hamming distance is that the drift computation is easy and we observe multiplicative drift, which is usually convenient to work with.

We have to overcome one difficulty, though, and this is that the only known lower bound theorem for multiplicative drift (Theorem~\ref{thm:multidriftlower}) requires that the process does not move away from the target, in other words, that the $g$-value is non-increasing with probability one. As discussed above, we do not have this property when using the Hamming distance as potential in a run of the \oea. We solve this problem by deriving from Theorem~\ref{thm:multidriftlower} a drift theorem (Theorem~\ref{thm:newdrift} below) that gives lower bounds also for processes that may move away from the optimum. Compared to Theorem~\ref{thm:multidriftlower}, we need the stronger assumptions (i) that we have a Markov process and (ii) that we have bounds not only for the drift $g(X^{(t)}) - g(X^{(t+1)})$ or the positive part $(g(X^{(t)}) - g(X^{(t+1)}))_+$ of it, but also for the positive progress $(s - g^{(t+1)})_+$ with respect to any reference point $s \le g(X^{(t)})$. This latter condition is very natural. In simple words, it just means that we cannot profit from going back to a worse (in terms of the potential) state of the Markov chain. 

A second advantage of these stronger conditions (besides allowing the analysis of non-decreasing processes) is that we can easily ignore an initial segment of the process (see Corollary~\ref{cor:newdrift}). This is helpful when we encounter a larger drift in the early stages of the process. This phenomenon is often observed, e.g.,  in~Lemma~6.7 of~\cite{Witt13}. Previous works, e.g.,~\cite{Witt13}, solved the problem of a larger drift in the early stage of the process by manually cutting off this phase. This requires again a decreasing process (or conditioning on not returning to the region that has been cut off) and an extra argument of the type that the process with high probability reaches a search point with potential in $[\tilde s_0, 2\tilde s_0]$ for a suitable $\tilde s_0$. So it is safe to say that Corollary~\ref{cor:newdrift} is a convenient way to overcome these difficulties.

We start by proving our new drift results, then compute that the Hamming distance to the optimum satisfies the assumptions of our drift results, and finally state and prove the precise lower bound.

\begin{theorem}\textsc{(multiplicative drift, lower bound, non-decreasing process)}\label{thm:newdrift} 
  Let $X^{(t)}, t = 0, 1, \ldots$ be a Markov process taking values in some set $\Omega$. Let $S \subset \R$ be a finite set of positive numbers with $\min(S) = 1$. Let $g : \Omega \to S$. Let $g(X^{(0)}) = s_0$ with probability one. Let $s_{\aim} \ge 1$. Let \[T := \min\{t \ge 0 \mid g(X^{(t)}) \le s_{\aim}\}\] be the random variable describing the first point in time for which $g(X^{(t)}) \le s_{\aim}$. 
  
  Let $0 < \beta, \delta \le 1$ be such that for all $\omega \in \Omega$, all $s_{\aim} < s \le g(\omega)$, and all $t \ge 0$ with $\Pr[X^{(t)} = \omega] > 0$, we have   
  \begin{align*}
	  E[&(s - g(X^{(t+1)}))_+ \mid X^{(t)} = \omega] \le \delta s,\\
	  \Pr[&s - g(X^{(t+1)}) \ge \beta s \mid X^{(t)} = \omega] \le \frac{\beta\delta}{\ln(s)}.
  \end{align*}
  Then \[E[T] \ge \frac{\ln(s_0) - \ln(s_{\aim})}{\delta} \frac{1-\beta}{1+\beta} \ge \frac{\ln(s_0) - \ln(s_{\aim})}{\delta} (1-2\beta).\]
\end{theorem}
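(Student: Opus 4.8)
The plan is to reduce Theorem~\ref{thm:newdrift} to the known lower-bound theorem for multiplicative drift (Theorem~\ref{thm:multidriftlower}) by constructing an auxiliary process that is manifestly non-increasing and whose hitting time lower-bounds $T$. The natural candidate is the \emph{running minimum} of the potential, $Y^{(t)} := \min\{g(X^{(0)}), g(X^{(1)}), \dots, g(X^{(t)})\}$. This process is non-increasing with probability one by construction, starts at $s_0$, lives in the finite set $S$ with $\min(S)=1$, and its hitting time of $s_{\aim}$ is exactly $T$, since $g(X^{(t)}) \le s_{\aim}$ iff $Y^{(t)} \le s_{\aim}$. So if I can verify the two drift hypotheses of Theorem~\ref{thm:multidriftlower} for $Y^{(t)}$, the conclusion $E[T] \ge \frac{\ln(s_0)-\ln(s_{\aim})}{\delta}\frac{1-\beta}{1+\beta}$ follows immediately, and the final inequality is just the elementary estimate $\frac{1-\beta}{1+\beta} \ge 1-2\beta$ for $\beta \in (0,1]$.

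\textbf{Key steps.} First I would observe the pointwise identity $Y^{(t)} - Y^{(t+1)} = (Y^{(t)} - g(X^{(t+1)}))_+$: the running minimum drops only when the new potential undercuts it, and then it drops exactly to that new value. This is where the hypothesis on $(s - g(X^{(t+1)}))_+$ for \emph{arbitrary} reference points $s \le g(X^{(t)})$ becomes essential — we need it with $s = Y^{(t)}$, which can be strictly smaller than $g(X^{(t)})$. Next I would set up the conditioning carefully: to invoke the hypotheses of Theorem~\ref{thm:newdrift} we condition on the full state $X^{(t)} = \omega$, not merely on $Y^{(t)} = s$. Using the Markov property and the tower rule, for any history with $Y^{(t)} = s > s_{\aim}$ (which forces $s \le g(X^{(t)})$), we get
\begin{align*}
E[Y^{(t)} - Y^{(t+1)} \mid Y^{(t)} = s] &= E\big[(s - g(X^{(t+1)}))_+ \,\big|\, Y^{(t)}=s\big] \le \delta s,\\
\Pr[Y^{(t)} - Y^{(t+1)} \ge \beta s \mid Y^{(t)} = s] &= \Pr\big[s - g(X^{(t+1)}) \ge \beta s \,\big|\, Y^{(t)}=s\big] \le \frac{\beta\delta}{\ln(s)},
\end{align*}
each obtained by averaging the per-$\omega$ bounds over the conditional distribution of $X^{(t)}$ given $Y^{(t)}=s$ (and, strictly, given the whole past, but the bounds are uniform so this is harmless). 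A small point to handle: Theorem~\ref{thm:multidriftlower} wants these bounds for all $s > s_{\aim}$ with $\Pr[Y^{(t)}=s]>0$; since $Y^{(t)} > s_{\aim}$ entails $t < T$ and $g(X^{(t)}) \ge Y^{(t)} = s > s_{\aim}$, the hypotheses of Theorem~\ref{thm:newdrift} apply with this $s$ as a valid reference point. Finally, apply Theorem~\ref{thm:multidriftlower} to $Y^{(t)}$ and translate back via $\{t : Y^{(t)} \le s_{\aim}\} = \{t : g(X^{(t)}) \le s_{\aim}\}$.

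\textbf{Main obstacle.} The delicate part is not any calculation but getting the conditioning exactly right: one must check that conditioning on $Y^{(t)}=s$ (or on the past trajectory of $Y$) is compatible with the per-state hypotheses, i.e.\ that averaging $E[(s-g(X^{(t+1)}))_+ \mid X^{(t)}=\omega] \le \delta s$ over all $\omega$ consistent with $Y^{(t)}=s$ preserves the bound. It does, because the bound $\delta s$ depends only on $s$ and not on $\omega$, and $Y^{(t)}=s$ together with $s > s_{\aim}$ guarantees $s_{\aim} < s \le g(\omega)$ for every such $\omega$ — so each term in the average satisfies the hypothesis. The analogous remark applies to the large-jump probability bound. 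Once this is pinned down, the rest is the verbatim application of Theorem~\ref{thm:multidriftlower} and the trivial inequality $\frac{1-\beta}{1+\beta} = 1 - \frac{2\beta}{1+\beta} \ge 1-2\beta$.
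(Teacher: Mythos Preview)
Your proposal is correct and follows essentially the same argument as the paper: define the running minimum $Y^{(t)} = \min_{\tau \le t} g(X^{(\tau)})$, use the identity $Y^{(t)} - Y^{(t+1)} = (Y^{(t)} - g(X^{(t+1)}))_+$, average the per-state hypotheses over $\omega$ with $g(\omega) \ge s$ via the Markov property, and apply Theorem~\ref{thm:multidriftlower}. Your discussion of the conditioning and the elementary estimate $\tfrac{1-\beta}{1+\beta} \ge 1-2\beta$ matches the paper's treatment as well.
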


The proof follows from an application of Witt's drift theorem (Theorem~\ref{thm:multidriftlower}) to the random process $Y^{(t)} := \min\{g(X^{(\tau)}) \mid \tau \in [0..t]\}$.

\begin{proof}
  We define a second random process by $Y^{(t)} := \min\{g(X^{(\tau)}) \mid \tau \in [0..t]\}$. By definition, $Y$ takes values in $S$ and $Y$ is decreasing, that is, we have $Y^{(t+1)} \le Y^{(t)}$ with probability one for all $t \le 0$. Trivially, we have $Y^{(0)} = g(X^{(0)}) = s_0$. Let $T_Y := \min\{t \ge 0 \mid Y^{(t)} \le s_{\aim}\}$ be the first time this new process reaches or goes below $s_{\aim}$. Clearly, $T_Y = T$. 
  
  Let $\beta, \delta$ as in the theorem. Let $s_{\aim} < s$ and $t \ge 0$ such that $\Pr[Y^{(t)} = s] > 0$. Observe that when $Y^{(t)} = s$, then $Y^{(t)} - Y^{(t+1)} = s - \min\{s,g(X^{(t+1)})\} = (s - g(X^{(t+1)}))_+$. Let $A^Y_s$ be the event that $Y^{(t)} = s$ and let $B^X_{\omega}$ be the event that $X^{(t)} = \omega$. Using the fact that $X$ is a Markov process, we compute
  \begin{align*}
	& E[Y^{(t)} - Y^{(t+1)} \mid A^Y_s] \\
	& = \sum_{\omega : g(\omega) \ge s} \Pr[A^Y_s \mid B] E[(s - g(X^{(t+1)}))_+ \mid A^Y_s, B^X_{\omega}]\\
	& = \sum_{\omega : g(\omega) \ge s} \Pr[B^X_{\omega} \mid A^Y_s] E[(s - g(X^{(t+1)}))_+ \mid B^X_{\omega}]\\
  & \le \sum_{\omega : g(\omega) \ge s} \Pr[B^X_{\omega} \mid A^Y_s]  \delta s = \delta s
  \end{align*}
  and
  \begin{align*}
	\Pr[Y^{(t)} &- Y^{(t+1)} \ge \beta s \mid A^Y_s]\\
	& = \sum_{\omega : g(\omega) \ge s} \Pr[B^X_{\omega} \mid A^Y_s] \Pr[s - X^{(t+1)} \ge \beta s \mid A^Y_s, B^X_{\omega}]\\
	& = \sum_{\omega : g(\omega) \ge s} \Pr[B^X_{\omega} \mid A^Y_s] \Pr[s - X^{(t+1)} \ge \beta s \mid B^X_{\omega}]\\
	& \le  \sum_{\omega : g(\omega) \ge s} \Pr[B^X_{\omega} \mid A^Y_s]  \frac{\beta\delta}{\ln(s)} = \frac{\beta\delta}{\ln(s)}.
	\end{align*}
  Consequently, $Y$ satisfies the assumptions of the multiplicative lower bound theorem (Theorem~\ref{thm:multidriftlower}). Hence $E[T] = E[T_Y] \ge \frac{\ln(s_0) - \ln(s_{\aim})}{\delta} \frac{1-\beta}{1+\beta}$. Elementary algebra shows $(1-\beta) \ge (1-2\beta)(1+\beta)$, which gives the second, more convenient lower bound.
\end{proof}

\begin{corollary}\label{cor:newdrift}
  Assume that the assumptions of Theorem~\ref{thm:newdrift} are satisfied, however with $\delta$ replaced by $\delta(s)$ for some function $\delta : S \to (0,1]$. Then for any $s_{\aim} < \tilde s_0 \le s_0$, we have \[E[T] \ge \frac{\ln(\tilde s_0) - \ln(s_{\aim})}{\delta_{\max}(\tilde s_0)} (1-2\beta),\] where $\delta_{\max}(\tilde s_0) := \max\{\delta(s) \mid s_{\aim} < s \le \tilde s_0\}$. 
\end{corollary}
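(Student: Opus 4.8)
The plan is to deduce Corollary~\ref{cor:newdrift} directly from Theorem~\ref{thm:newdrift}, applied to the \emph{same} Markov process $X^{(t)}$ but with the potential $g$ replaced by its truncation at level $\tilde s_0$. Concretely, set $\hat g : \Omega \to \R$, $\hat g(\omega) := \min\{g(\omega),\tilde s_0\}$, taking values in the finite set $\hat S := \{\min\{s,\tilde s_0\} \mid s \in S\}$. Since $s_{\aim} \ge 1$ and $\tilde s_0 > s_{\aim}$, the set $\hat S$ still consists of positive numbers with $\min(\hat S)=1$, and since $\tilde s_0 \le s_0$ we have $\hat g(X^{(0)}) = \tilde s_0$ with probability one; that is, the truncated potential ``starts'' exactly at the level $\tilde s_0$ that we want to appear in the conclusion.

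First I would observe that the hitting time is unaffected: because $s_{\aim} < \tilde s_0$, the events $\{g(X^{(t)}) \le s_{\aim}\}$ and $\{\hat g(X^{(t)}) \le s_{\aim}\}$ coincide, so $T = \min\{t \ge 0 \mid \hat g(X^{(t)}) \le s_{\aim}\}$. Next I would verify the two drift hypotheses of Theorem~\ref{thm:newdrift} for $\hat g$ with the \emph{constant} $\delta := \delta_{\max}(\tilde s_0)$. Here one only needs to consider reference points $s$ with $s_{\aim} < s \le \hat g(\omega) \le \tilde s_0$, so in particular $\delta(s) \le \delta_{\max}(\tilde s_0)$ by definition of $\delta_{\max}$. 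A one-line case distinction on whether $g(X^{(t+1)}) \le \tilde s_0$ (using $s \le \tilde s_0$, hence $s - \tilde s_0 \le 0$) yields the pointwise identity $(s - \hat g(X^{(t+1)}))_+ = (s - g(X^{(t+1)}))_+$ and the event inclusion $\{s - \hat g(X^{(t+1)}) \ge \beta s\} \subseteq \{s - g(X^{(t+1)}) \ge \beta s\}$. Plugging these into the assumed bounds (which hold for $g$ with $\delta(s)$, for every $\omega$, every $s_{\aim} < s \le g(\omega)$ and every $t$ with $\Pr[X^{(t)} = \omega] > 0$, hence in particular for our $s \le \hat g(\omega) \le g(\omega)$) gives exactly the hypotheses of Theorem~\ref{thm:newdrift} for $\hat g$. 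Applying that theorem then gives $E[T] \ge \frac{\ln(\tilde s_0) - \ln(s_{\aim})}{\delta_{\max}(\tilde s_0)}(1 - 2\beta)$, as claimed.

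I do not expect a genuine obstacle. The only point that needs a moment of care is the interaction of the truncation with the positive-part quantities $(s - g(X^{(t+1)}))_+$ and $(s - \hat g(X^{(t+1)}))_+$ in the hypotheses — i.e.\ checking that truncating the potential never enlarges the recorded positive progress nor the ``large-jump'' events, which is precisely where the standing inequality $s \le \tilde s_0$ is used. It is also worth stating explicitly that Theorem~\ref{thm:newdrift} only exploits the Markov property of $X^{(t)}$ and no property of the chosen potential, so replacing $g$ by $\hat g$ is legitimate; everything else is bookkeeping.
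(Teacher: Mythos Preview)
Your proposal is correct and follows essentially the same approach as the paper's proof: truncate the potential via $\tilde g(\omega) := \min\{\tilde s_0, g(\omega)\}$, verify (using $s \le \tilde s_0$) that $(s - \tilde g(X^{(t+1)}))_+ = (s - g(X^{(t+1)}))_+$ so that both hypotheses of Theorem~\ref{thm:newdrift} carry over with the constant $\delta_{\max}(\tilde s_0)$, and then apply that theorem. Your write-up is, if anything, slightly more explicit than the paper's about why the hitting time is unchanged and why the truncation does not enlarge the positive progress or the large-jump events.
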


\begin{proof}
  Let $\tilde S := S \cap [0,\tilde s_0]$. Let $\tilde g: \Omega \to \tilde S; \omega \mapsto \min\{\tilde s_0,g(\omega)\}$. Let $\omega \in \Omega$, $s_{\aim} < s \le \tilde g(\omega)$, and $t$ be such that $\Pr[X^{(t)} = \omega] > 0$. Then 
\begin{align*}
   E[(s-\tilde g&(X^{(t+1)}))_+ \mid X^{(t)} = \omega]\\
  & = E[(s-g(X^{(t+1)}))_+ \mid X^{(t)} = \omega]\\
  & \le \delta(s) s \le \delta_{\max}(\tilde s_0) s
\end{align*}  
   by the assumptions of Theorem~\ref{thm:newdrift} and $s \le \tilde s_0$. Similarly, 
\begin{align*}
  \Pr[s - \tilde g&(X^{(t+1)})  \ge \beta s \mid X^{(t)} = \omega]\\
  & = \Pr[s - g(X^{(t+1)}) \ge \beta s \mid X^{(t)} = \omega]\\
  & \le \frac{\beta\delta(s)}{\ln(s)}
   \le \frac{\beta\delta_{\max}(\tilde s_0)}{\ln(s)}.
\end{align*}
  
   Hence we may apply Theorem~\ref{thm:newdrift} to $(\tilde S, \tilde s_0, \tilde g, \delta_{\max}(\tilde s_0))$ instead of $(S,s_0,g,\delta)$ and obtain the claimed bound.
\end{proof}

\begin{lemma}\label{lem:delta}  Let $f$ be an $r$-valued $\onemax$ function with optimum $z$. Let $x \in [r]^n$ and $y$ be the outcome of applying mutation and selection to $x$. Let $s^+ := H(x,z)$ and $s \le s^+$. Then \[E[(s - H(y,z))_+] \le \frac{s}{e(r-1)n} \bigg(\frac 1{1-1/n} + 3e\frac{s+1}{(r-1)n}\bigg).\]
\end{lemma}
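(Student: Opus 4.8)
The plan is to bound $E[(s - H(y,z))_+]$ by conditioning on the event that at least one position contributes to a decrease of the Hamming distance, and then to sum the contributions position by position. Recall that $g(x) := H(x,z)$ counts non-optimal positions. A position $i$ can contribute to decreasing $g$ only if $x_i \neq z_i$ (there are $s^+$ such positions), and only if mutation selects $i$ (probability $1/n$) and the uniform step operator happens to hit $z_i$ exactly (probability $1/(r-1)$). So the "first-order" event — exactly one position flips to optimal and everything else is untouched — contributes a decrease of $1$ with probability $s^+ \cdot \frac 1n \cdot \frac{1}{r-1} \cdot (1-1/n)^{n-1}$. Since selection only accepts $y$ with $f(y) \le f(x)$, and since $(s - g(y))_+$ is at most $s^+ - g(y)$ when $g(y) < s \le s^+$, I would write
\[
E[(s - H(y,z))_+] \le \sum_{k \ge 1} k \cdot \Pr[\text{the accepted }y\text{ has exactly }k\text{ fewer non-optimal positions than }x, \text{ capped by }s].
\]
The dominant term is $k=1$; the real work is controlling the tail $k \ge 2$.

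**Key steps.**
First, I would isolate the $k=1$ contribution. Up to the cap at $s$, the probability that $g$ decreases by exactly $1$ is at most $\frac{s^+}{n(r-1)}$ (ignoring the $(1-1/n)^{n-1}$ saving in the upper direction, or better, keeping it), but we want the bound phrased with $s$, not $s^+$, in front; here one uses $(s-g(y))_+ \le s^+ - g(y)$ is \emph{not} quite what gives the clean $s$-prefactor, so instead I would argue directly that $E[(s-g(y))_+] = \sum_{j \ge 1} \Pr[g(y) \le s - j]$, and $\Pr[g(y) \le s-j]$ requires at least $j$ of the $s^+$ bad positions to be simultaneously corrected, which has probability at most $\binom{s^+}{j}\big(\frac{1}{n(r-1)}\big)^j$. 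Summing the geometric-like series $\sum_{j\ge 1}\binom{s^+}{j}\big(\frac{1}{n(r-1)}\big)^j$ and using $\binom{s^+}{j} \le (s^+)^j/j!$, together with $s^+ \le s$ in... wait — here $s \le s^+$, so one must be more careful: split off $j=1$ giving $\frac{s^+}{n(r-1)}$, but we need $\frac{s}{n(r-1)}$. The resolution is that $\Pr[g(y) \le s-1]$ forces at least one correction \emph{among positions that matter for reaching level $s$}, and since we start from $g(x)=s^+ \ge s$, reaching $g(y) \le s-1$ from above actually needs $g(x) - g(y) \ge s^+ - s + 1$ corrections. So $\Pr[g(y)\le s-j]$ needs $\ge s^+-s+j$ simultaneous corrections, probability $\le \binom{s^+}{s^+-s+j}\big(\frac{1}{n(r-1)}\big)^{s^+-s+j}$. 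Then $E[(s-g(y))_+] \le \sum_{j\ge1}\binom{s^+}{s^+-s+j}\big(\frac{1}{n(r-1)}\big)^{s^+-s+j}$; I would bound $\binom{s^+}{s^+-s+1} \le s^+ \le$ … and here the factor $\frac{1}{1-1/n}$ in the claimed bound suggests the correct move is to \emph{not} discard $(1-1/n)^{n-1}$ but rather to note $(1-1/n)^{-(n-1)} \le \frac{e}{1-1/n}$ — actually the term $\frac{1}{e(r-1)n}\cdot\frac{1}{1-1/n}$ equals $\frac{1}{(r-1)n}\cdot\frac{(1-1/n)^{n-1}}{?}$... I would match constants by writing $(1-1/n)^{n-1} \ge 1/e$ hence $1/e \le (1-1/n)^{-(n-1)} \cdot$ something; the clean identity is $\frac 1{e n (1-1/n)} = \frac{1}{(n-1)e} \ge \frac{(1-1/n)^{n-1}}{n-1}\cdot\frac 1{?}$ — I'll fix this in the writeup, the point is the $j=1$ term yields $\frac{s}{e(r-1)n}\cdot\frac{1}{1-1/n}$ after bounding $\binom{s^+}{s^+-s+1}\cdot(\cdots)$ appropriately.

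**The tail and the obstacle.**
For $j \ge 2$ (i.e., $\ge s^+-s+2$ simultaneous corrections), I would bound $\binom{s^+}{s^+-s+j}\big(\frac{1}{n(r-1)}\big)^{s^+-s+j}$ crudely: the ratio of the $(j+1)$-term to the $j$-term is at most $\frac{s^+}{n(r-1)} \le \frac 1{(r-1)} \le 1/2$ say for $r \ge 3$ (the $r=2$ case is the classical result and can be cited or handled separately), so the tail $\sum_{j \ge 2}$ is at most twice its first term, which is $\le (s^+)^2\big(\frac{1}{n(r-1)}\big)^2 \cdot (\text{const}) \le$ something of order $\frac{s(s+1)}{(r-1)^2 n^2}$ after using $s^+ \le$ a constant times $s$ … but again $s \le s^+$, not the reverse, so I need $s^+ \le n$ trivially and $s^+ - s + 2 \ge 2$ gives the two corrected positions can be \emph{any} two of the $s^+$ bad ones, bounded by $\binom{s^+}{2}\big(\frac 1{n(r-1)}\big)^2$; to get $s(s+1)$ in the numerator rather than $s^+(s^+-1)$ I must exploit that we only need to reach level $s$, meaning among the corrections one "uses up" the budget down to $s$ and the relevant combinatorial factor is really $s^+ \cdot \binom{s}{\cdot}$-like. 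The main obstacle, then, is precisely this bookkeeping: carefully separating the "$s^+ - s$ forced corrections to get from $s^+$ down to $s$" from the "$j$ further corrections below $s$", so that the final bound has $s$ and $s+1$ (not $s^+$) in the numerator and the constant $3e$ comes out. I expect this to be a careful but elementary union-bound / geometric-series computation; everything else (the $1/(1-1/n)$ factor, the leading $\frac{1}{e(r-1)n}$) drops out of handling $(1-1/n)^{n-1}$ with the standard bounds $1/e \le (1-1/n)^{n-1}$ and $(1-1/n)^{-1} = \frac{n}{n-1}$.
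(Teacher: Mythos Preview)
Your proposal has a genuine gap, and it is exactly the one you yourself flag as ``the main obstacle'': you never explain how to replace $s^+$ by $s$ (and $s+1$) in the bound. The paper resolves this not by ``careful bookkeeping'' of the union-bound sum, but by a \emph{case split} together with a \emph{stochastic domination} argument, neither of which appears in your outline.

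Concretely, the paper treats $s = s^+$ and $s \le s^+ - 1$ separately. For $s = s^+$ it conditions on whether the mutation touches exactly one position (probability $(1-1/n)^{n-1}$, contributing $E[(s-H(y,z))_+] = s/((r-1)n)$ conditionally), or at least two positions are corrected. The first event is where the factor $\tfrac{1}{e(1-1/n)}$ comes from, via $(1-1/n)^{n-1} \le \tfrac{1}{e(1-1/n)}$. Your tail-sum decomposition $E[(s-g(y))_+] = \sum_{j\ge 1}\Pr[g(y)\le s-j]$ with the union bound $\Pr[g(y)\le s-j]\le \binom{s^+}{s^+-s+j}p^{s^+-s+j}$ loses this factor entirely: when $s=s^+$ the $j=1$ term is simply $sp = s/((r-1)n)$, which exceeds the target leading term $\tfrac{s}{e(r-1)n(1-1/n)}$ by roughly a factor~$e$, and the second-order term $3s(s+1)p^2$ cannot absorb the gap when $s$ is small (e.g.\ $s$ of order $(\ln n)^3$). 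So the approach, as written, fails for $s=s^+$.

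For $s \le s^+ - 1$ the paper's key idea is to dominate: with $Z$ the number of positions incorrect in both $x$ and the mutant, one has $H(y,z) \ge Z$ always, and then $Z \succeq Z'$ where $Z'$ is defined analogously for a point $x'$ with $H(x',z) = s+1$. Hence $(s-H(y,z))_+ \preceq (s-Z')_+$, and now only $s+1$ positions are in play --- this is precisely how $s^+$ disappears and $s(s+1)$ enters. Your plan to ``separate the $s^+-s$ forced corrections from the $j$ further corrections'' does not work directly, because the corrections are exchangeable; there is no distinguished subset. One \emph{can} push your union-bound sum through for $m := s^+-s \ge 1$ by proving the inequality $\binom{s+m}{m+1}p^{m+1} \le \binom{s+1}{2}p^2$ (using $(s+m)p \le 1$) and then summing a geometric tail, but you neither state this inequality nor indicate you see it; the domination/coupling argument is both cleaner and what the paper actually does.
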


\begin{proof}
Let $u$ be the outcome of mutating $x$ with uniform mutation strength and $y$ be the result of applying selection (with respect to $f$) to $x$ and $u$. 

We consider first the case that $s = s^+$. With probability $(1 - (1/n))^{n-1}$, $u$ and $x$ differ in exactly one position. 
Conditional on this, $E[(s - H(y,z))_+] = s / (r-1)n$. The only other event in which possibly $s > H(y,z)$ is that $u$ and $x$ differ in at least two positions $i$ and $j$ such that $u_i=z_i$ and $u_j = z_j$. The probability for this to happen is $1 - (1-1/(r-1)n)^s - (s/(r-1)n)(1-1/(r-1)n)^{s-1} \le 1 - (1-s/(r-1)n) - (s/(r-1)n)(1-(s-1)/(r-1)n) = s(s-1) / (r-1)^2 n^2$. In this case, we can estimate $(s-H(y,z))_+$ from above by the number of non-correct positions that are touched by the mutation operator, which in this case is $\Bin(s,1/n)$ conditional on being at least two, which again is at most $3$.
Consequently, in the case that $s = s^+$, we have $E[(s-H(y,z)_+] \le (1 - (1/n))^{n-1} s / (r-1)n + 3s(s-1)/(r-1)^2 n^2 \le (s/(r-1)n) (1/e(1-1/n) + 3(s-1)/(r-1)n)$.

Let now $s \le s^+ - 1$. Let $Z := |\{i \in [1..n] \mid x_i \neq z_i \neq u_i\}| \in [0..s^+]$ be the number of positions that are incorrect in both $x$ and $u$. Clearly, $H(y,z)$ stochastically dominates $Z$, which we write as $H(x,z) \succeq Z$. Let $Z'$ be defined analogous to $Z$ but for an original search point $x'$ with $H(x',z) = s+1 \le H(x,z)$. Then, clearly, $Z \succeq Z'$. Consequently, $s - H(y,z) \preceq s-Z \preceq s - Z'$, and consequently, $(s-H(y,z))_+ \preceq (s-Z')_+$ and $E[(s-H(y,z)_+] \le E[(s-Z')_+]$. The only way to get a positive value for $s - Z'$ is that at least two incorrect positions of $x'$ are changed to their correct value in the mutation offspring. Analogous to the previous paragraph, the probability for this to happen is $1 - (1-1/(r-1)n)^{s+1} - ((s+1)/(r-1)n)(1-1/(r-1)n)^{s} \le (s+1)s / (r-1)^2 n^2$. In this case, we can estimate $(s-Z')_+$ from above by the number of incorrect positions that are touched by the mutation operator (conditional on being at least two) minus one, which is at most $2$. We conclude $E[(s-H(y,z))_+] \le E[(s-Z')_+] \le 2(s+1)s/(r-1)^2 n^2$. 

Putting the two cases together, we see that we always have $E[(s-H(y,z))_+] \le (s/(r-1)n) (1/e(1-1/n) + 3(s+1)/(r-1)n)$.
\end{proof}

\begin{lemma}\label{lem:beta}  Let $f$ be an $r$-valued $\onemax$ function with optimum $z$. Let $s_{\aim} = \ln(n)^3$ and $\beta = 1 / \ln(n)$. Let $x \in [r]^n$ with $H(x,z) > s_{\aim}$. Let $s_{\aim} < s \le H(x,z)$. Let $y$ be the outcome of applying mutation and selection to $x$. Then $\Pr[s - H(y,z) \ge \beta s] \le \frac{1}{r-1} 2^{-\ln(n)^2}$ if $n \ge 11$.
\end{lemma}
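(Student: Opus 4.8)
The plan is to bound the event by the upper tail of a binomial random variable and then estimate that tail by a standard union bound. Write $u$ for the offspring produced by the mutation step (before selection), so that $y\in\{x,u\}$, and let $N$ be the number of positions $i$ with $x_i\neq z_i$ but $u_i=z_i$, i.e.\ the number of incorrect positions of $x$ that mutation corrects. Since the uniform step resets a chosen component to one of the $r-1$ other values uniformly at random, each incorrect position joins $N$ independently with probability $\tfrac1{(r-1)n}$, so $N\sim\Bin(H(x,z),\tfrac1{(r-1)n})$ and, because $H(x,z)\le n$, we have $E[N]=\tfrac{H(x,z)}{(r-1)n}\le\tfrac1{r-1}$. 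A one-line case distinction shows $H(y,z)\ge H(x,z)-N$ whatever selection does: this is trivial if $y=x$, and if $y=u$ then $H(u,z)$ equals $H(x,z)-N$ plus the number of positions mutation turned from correct to incorrect. Combined with $s\le H(x,z)$ this gives the pointwise inequality $s-H(y,z)\le N$, hence $\Pr[s-H(y,z)\ge\beta s]\le\Pr[N\ge\beta s]$.

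Next I would estimate this binomial tail at $k:=\lceil\beta s\rceil$, an integer satisfying $k\ge\beta s=s/\ln n>\ln(n)^2$ because $s>s_{\aim}=\ln(n)^3$. The union bound over $k$-element index sets, together with $\binom{m}{k}\le m^k/k!$ and $k!\ge(k/e)^k$, gives
\[
\Pr[N\ge k]\ \le\ \binom{H(x,z)}{k}\Big(\tfrac1{(r-1)n}\Big)^{k}\ \le\ \frac{(E[N])^{k}}{k!}\ \le\ \Big(\frac{e\,E[N]}{k}\Big)^{k}.
\]
Since $E[N]\le\tfrac1{r-1}\le1$ and $k\ge1$, we may pull out one factor, $(E[N])^k\le E[N]\le\tfrac1{r-1}$, to obtain $\Pr[N\ge k]\le\tfrac1{r-1}\,(e/k)^{k}$.

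It then remains to verify $(e/k)^{k}\le 2^{-\ln(n)^2}$, i.e.\ $k(\ln k-1)\ge\ln(2)\,\ln(n)^2$. Here I would use the estimate $k>\ln(n)^2$ twice. For $n\ge11$ we have $\ln(n)^2\ge\ln(11)^2>2e$, so $\ln k>\ln(2e)=1+\ln2$ and hence $\ln k-1>\ln2$; multiplying this by $k>\ln(n)^2$ yields $k(\ln k-1)>\ln(2)\,\ln(n)^2$, as required. This is exactly where the hypothesis $n\ge11$ enters (for integer $n$, $\ln(n)^2>2e$ is equivalent to $n\ge11$), and it is also why $s_{\aim}$ is chosen as large as $\ln(n)^3$, so that $\beta s>\ln(n)^2$. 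Chaining the three estimates gives $\Pr[s-H(y,z)\ge\beta s]\le\tfrac1{r-1}\,2^{-\ln(n)^2}$.

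This is a routine large-deviation computation, so I do not anticipate a genuine obstacle. The two points that need a little care are the pointwise bound $s-H(y,z)\le N$, which is what lets us discard the selection step cleanly, and keeping the constants sharp enough that the threshold $n\ge11$ suffices in the last inequality.
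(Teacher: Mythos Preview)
Your proof is correct and follows essentially the same route as the paper: bound $s-H(y,z)$ by the number $N$ of incorrect positions that mutation turns correct, observe $N$ is binomial with mean at most $1/(r-1)$, and apply a Chernoff-type tail at threshold $\ln(n)^2$, with the numerical check reducing to $\ln(n)^2>2e$, i.e.\ $n\ge 11$. The only cosmetic differences are that the paper first replaces $\beta s$ by $\beta s_{\aim}=\ln(n)^2$ before invoking the tail bound, and cites a Chernoff inequality rather than deriving $(eE[N]/k)^k$ from the union bound.
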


\begin{proof}
We have that $\Pr[s - H(y,z) \ge \beta s] \le \Pr[H(x,z) - H(y,z) \ge \beta s_{\aim}]$. The latter is at most the probability that at least $\beta s_{\aim} = \ln(n)^2$ positions of $x$ flip to a particular value (namely the one given by $z$) in one mutation step. Since the expected number of positions flipping to the correct value is at most $1/(r-1)$, a strong multiplicative Chernoff bound (e.g., Cor.~1.10(b) in~\cite{Doerr11bookchapter}) shows that this number is greater than $\ln(n)^2$ with probability at most $(e / \ln(n)^2(r-1))^{\ln(n)^2} \le \frac{1}{r-1} 2^{-\ln(n)^2}$ for $n \ge 10.29 \approx \exp(\sqrt{2e})$.
\end{proof}

We are now ready to give the main result of this section.

\begin{theorem}\label{thm:eauniformlower}
  For any $r$-valued \onemax function, the \oea with uniform mutation strength has an expected optimization time of 
  \begin{align*}
  E[T] &\ge e(r-1)n \,(\ln(n) - 6\ln\ln(n)) \, (1 - O(1/\ln(n))) \\
  	&\ge e(r-1)n\ln(n) - O((r-1)n\ln\ln(n)).
  \end{align*}
\end{theorem}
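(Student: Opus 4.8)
The plan is to apply the new lower-bound drift theorem for non-decreasing processes in the form of Corollary~\ref{cor:newdrift}, run on the Markov process $X^{(t)}$ = current search point of the \oea with the Hamming distance $g(x):=H(x,z)$ to the optimum $z$ as potential (formally $\max\{H(x,z),1\}$ to meet the $\min(S)=1$ requirement, which changes neither $T$ nor the estimates since $s_{\aim}\ge 1$). A lower bound on the run time needs only an \emph{upper} bound on the drift, so it does no harm that $g$ may occasionally increase along a run of the \oea; this is precisely the difficulty Theorem~\ref{thm:newdrift} was built to circumvent, which is why we do not need to work with the fitness here. The work then reduces to checking the two drift hypotheses with good parameters, choosing a cut-off level $\tilde s_0$, and handling the random initial point.

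For the drift hypotheses, set $\delta(s):=\frac{1}{e(r-1)n}\bigl(\frac{1}{1-1/n}+3e\,\frac{s+1}{(r-1)n}\bigr)$, which is increasing in $s$ and $\le 1$ for $n$ large. Lemma~\ref{lem:delta} is exactly the statement that $E[(s-g(X^{(t+1)}))_+\mid X^{(t)}=\omega]\le\delta(s)\,s$ for every state $\omega$ and every $s\le g(\omega)$, giving the first hypothesis. Put $s_{\aim}:=\ln(n)^3$ and $\beta:=1/\ln(n)$. Lemma~\ref{lem:beta} gives $\Pr[s-g(X^{(t+1)})\ge\beta s\mid X^{(t)}=\omega]\le\frac{1}{r-1}2^{-\ln(n)^2}$ for $s_{\aim}<s\le g(\omega)$, and since $\delta(s)\ge\frac{1}{e(r-1)n}$ and $\ln(s)\le\ln(n)$ (because $g\le n$), the threshold $\frac{\beta\delta(s)}{\ln(s)}$ is at least $\frac{1}{e(r-1)n\ln(n)^2}$, which exceeds $\frac{1}{r-1}2^{-\ln(n)^2}$ for $n$ large. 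So the second hypothesis holds too, with the $s$-dependent rate $\delta(s)$ as permitted by Corollary~\ref{cor:newdrift}.

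Now choose $\tilde s_0:=n/\ln(n)^3$. Since $g\le n$ always, $\tilde s_0$ is admissible, $\tilde s_0>s_{\aim}$ for $n$ large, and $\tilde s_0/((r-1)n)=O(1/\ln(n)^3)$, so by monotonicity $\delta_{\max}(\tilde s_0)=\delta(\tilde s_0)=\frac{1}{e(r-1)n}\bigl(1+O(1/\ln(n)^3)\bigr)$; also $\ln(\tilde s_0)-\ln(s_{\aim})=(\ln n-3\ln\ln n)-3\ln\ln n=\ln n-6\ln\ln n$. Because $E[g(x_0)]=n(1-1/r)\ge n/2$, a Chernoff bound gives $g(x_0)\ge\tilde s_0$ with probability $1-e^{-\Omega(n)}$. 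Conditioning on $X^{(0)}=\omega_0$ for any such $\omega_0$ leaves the one-step drift bounds (which hold from every state) intact, so Corollary~\ref{cor:newdrift} applies and yields
\[E[T\mid X^{(0)}=\omega_0]\ \ge\ \frac{\ln(\tilde s_0)-\ln(s_{\aim})}{\delta_{\max}(\tilde s_0)}\,(1-2\beta)\ =\ e(r-1)n\,(\ln n-6\ln\ln n)\,(1-O(1/\ln n)).\]
Averaging over $x_0$ and absorbing the $e^{-\Omega(n)}$ failure probability into the error factor gives the claimed bound; rewriting $(\ln n-6\ln\ln n)(1-O(1/\ln n))$ as $\ln n-O(\ln\ln n)$ yields the second displayed inequality of the theorem.

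The drift theorem for non-monotone potentials is already in hand (Theorem~\ref{thm:newdrift} and Corollary~\ref{cor:newdrift}), and the jump bound of Lemma~\ref{lem:beta} is a routine Chernoff estimate, so the genuine obstacle sits in Lemma~\ref{lem:delta}: one must bound $E[(s-H(y,z))_+]$ not merely for the reference point $s=H(x,z)$ but for \emph{every} $s\le H(x,z)$, which forces a stochastic-domination comparison with a hypothetical parent at Hamming distance $s+1$ (so that the positive progress past $s$ is controlled by the few incorrect positions that get corrected in one step). The complementary subtlety is making sure the rate $\delta(s)$ extracted this way stays $(1+o(1))\frac{1}{e(r-1)n}$ up to level $\tilde s_0$: this is exactly why the drift argument cannot be started at $g(x_0)=\Theta((r-1)n)$ — where the per-step drift coefficient is larger by a constant factor — and the initial segment down to $\tilde s_0$ must be discarded via the corollary.
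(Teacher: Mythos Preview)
Your proposal is correct and follows essentially the same route as the paper's own proof: the same potential $g=H(\cdot,z)$, the same parameters $s_{\aim}=\ln(n)^3$, $\beta=1/\ln(n)$, $\tilde s_0=n/\ln(n)^3$, the same rate $\delta(s)$, the same two lemmas feeding into Corollary~\ref{cor:newdrift}, and the same Chernoff step for the initial point. Your added remarks (the $\min(S)=1$ fix via $\max\{H(x,z),1\}$, the explicit check that $\frac{1}{r-1}2^{-\ln(n)^2}\le\frac{\beta\delta(s)}{\ln(s)}$, and the reason the argument must start at $\tilde s_0$ rather than at $g(x_0)$) are all sound and only sharpen the exposition.
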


\begin{proof}  Let $n$ be sufficiently large. Let $\Omega = [r]^n$ and $f: \Omega \to \R$ an $r$-valued \onemax function with optimum $z$. Let $s_{\aim} = \ln(n)^3$ and $\beta = 1 / \ln(n)$. For all $s_{\aim} < s \le n$, let $\delta(s) := (1/e(r-1)n) (1/(1-1/n) + 3e(s+1)/(r-1)n)$. Consider a run of the \oea optimizing $f$ initialized with a random search point $X^{(0)}$. We have $E[H(X^{(0)},z)] = n(1 - 1/r)$. Consequently, we have $H(X^{(0)},z) \ge n/3$ with probability $1 - \exp(-\Omega(n))$. In the following, we thus assume that $X^{(0)}$ is a fixed initial search point with some $H(\cdot,z)$ value of at least $n/3$. Denote by $X^{(t)}$ the search point building the one-element population of this EA after the $t$-th iteration. Let $g : \Omega \to \N; x \mapsto H(x,z)$. By Lemma~\ref{lem:delta} and~\ref{lem:beta}, the following conditions are satisfied for all $\omega \in \Omega$, all $s_{\aim} < s \le g(\omega)$, and all $t \ge 0$ with $\Pr[X^{(t)} = \omega] > 0$.
  \begin{align*}
	  E[&(s - g(X^{(t+1)}))_+ \mid X^{(t)} = \omega] \le \delta(s) s.\\
	  \Pr[&s - g(X^{(t+1)}) \ge \beta s \mid X^{(t)} = \omega] \le \frac{\beta\delta(s)}{\ln(s)}.
  \end{align*}
  We apply Corollary~\ref{cor:newdrift} with $\tilde s_0 = n / \ln(n)^3 \le n/3$ and $\delta_{\max}(\tilde s_0) \le \frac{1}{e(r-1)n}(1 + O(1/n) + O(1/ \ln(n)^3 (r-1)))$ and obtain 
  \begin{align*}
  E[T] &\ge \frac{\ln(\tilde s_0) - \ln(s_{\aim})}{\delta_{\max}( \tilde s_0)} (1-2\beta)\\
         &\ge e(r-1)n\ln(n) - O((r-1)n\ln\ln(n)).
  \end{align*}
\end{proof}

We remark that the lower order term $O((r-1)n\log \log n)$ in this lower bound could be removed with stronger methods. We preferred to use the simple and natural proof approach via multiplicative drift, because it is easy to handle and still relatively precisely describes the true behavior of the process. As is visible from Lemma~\ref{lem:delta}, in the early stages the progress is slightly faster than the multiplicative (main) term $s / e (r-1) n$. This is why we cut out the regime from the initial $H$-value of approximately $n(1-1/r)$ up to an $H$-value of $\tilde s_0 = n/\ln(n)^3$, resulting in a $-\Theta((r-1) n \log\log n)$ term in our lower bound. Another $-\Theta((r-1) n \log \log n)$ term stems from the second condition of Witt's lower bound drift theorem (which is similar to the second condition of our theorem). To prove a bound sharp up to terms of order $(r-1) n \log \log n$, we need $\beta \le \log\log n / \log n$. However, this forbids using an $s_{\aim}$ smaller than $1/\beta = \log n / \log\log n$, since otherwise any improvement would count into the bad event of the second condition. An $s_{\aim}$ of at least polylogarithmic size immediately implies an $\Omega((r-1)n \log\log n)$ additive distance to the upper bound proven in Theorem~\ref{thm:eauniformupper}. We are very optimistic that via variable drift, in particular, the lower bound theorem of~\cite{DoerrFW11}, both difficulties could be overcome. We do not think that this small improvement justifies the effort, though.

\section{Unit Mutation Strength}
\label{sec:pm1}

In this section we regard the mutation operator that applies only $\pm 1$ changes to each component. 

It is not very surprising that RLS with the $\pm 1$ variation operator needs $\Theta(n(r + \log n))$ fitness evaluations in expectation to optimize any $r$-valued \OneMax function. We give the full proof below since it is similar to the analysis of the \oea equipped with the $\pm 1$ variation operator. The proof makes use of the following observation. There are two extreme kinds of individuals with fitness $n$. The first kind is only incorrect in one position (by an amount of $n$); the second kind is incorrect in every position (by an amount of $1$). The first kind of individual is hard to improve (the deficient position has to be chosen for variation), while the second kind is very easy to improve (every position allows for improvement). We reflect this in our choice of potential function by giving each position a weight exponential in the amount that it is incorrect, and then sum over all weights.

\begin{theorem}
\label{thm:RLS1}
The expected optimization time of RLS with the $\pm 1$ variation operator is $\Theta(n(r + \log n))$ for any $r$-valued \onemax function.
\end{theorem}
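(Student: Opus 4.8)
The plan is to prove matching upper and lower bounds on the expected optimization time of RLS with the $\pm 1$ operator, both via drift analysis using the exponential potential function sketched in the text. Concretely, I would set the potential of a search point $x$ with target $z$ to be something like
\[
g(x) = \sum_{i=1}^n \left( (1+c)^{d(x_i,z_i)} - 1 \right)
\]
for a suitable constant $c > 0$ (to be tuned; $c=1$, i.e.\ weight $2^{d_i}$, is the natural first guess). The key structural facts about RLS with $\pm 1$ are: exactly one position is touched per step, it is chosen uniformly at random, and selection accepts iff fitness does not worsen, so a position with $d_i \ge 1$ improves (decreasing $d_i$ by $1$) with probability $\tfrac{1}{n}\cdot\tfrac12$ (or $\tfrac1n$ when at a boundary that only allows the improving move), and $d_i$ never increases. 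Hence the potential is non-increasing and each positive-distance position contributes a drift of at least $\tfrac{1}{2n}\big((1+c)^{d_i}-(1+c)^{d_i-1}\big) = \tfrac{c}{2n(1+c)}\big((1+c)^{d_i}-1\big)\cdot\tfrac{(1+c)^{d_i}}{(1+c)^{d_i}-1}$; the point of the exponential weights is exactly that $(1+c)^{d_i}-(1+c)^{d_i-1} = \tfrac{c}{1+c}(1+c)^{d_i} \ge \tfrac{c}{1+c}\big((1+c)^{d_i}-1 + 1\big)$, giving a lower bound on the drift that is a constant times the per-position potential plus a positive constant. Summing over all incorrect positions yields $E[g(x)-g(y)\mid x] \ge \tfrac{c}{2n(1+c)}\, g(x)$ up to the additive savings, i.e.\ (roughly) multiplicative drift with rate $\delta = \Theta(1/n)$.

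\textbf{Upper bound.} With the multiplicative drift theorem (Theorem~\ref{thm:multidrift}), starting from $g(x_0) \le n\big((1+c)^{r-1}-1\big) = 2^{O(r)}\cdot n$ for $c=1$, one would get $E[T] = O(n \log(g(x_0))) = O(n(r+\log n))$. One has to be slightly careful: the smallest positive value of $g$ is $s_{\min}=c$ (a single position at distance $1$), and $\ln(s_0/s_{\min}) = O(r + \log n)$, so the bound is $O(n(r+\log n))$ as desired. Actually a cleaner route that also gives the precise $\Theta$ and avoids exponential-in-$r$ blowup in the logarithm is to split the run into two phases using variable/additive drift on the distance directly: while many positions are wrong, progress is fast (like $\onemax$, costing $O(n\log n)$ to make all $d_i \le$ some constant, or more carefully to reduce the number of wrong positions), and then a final phase where $O(n)$ wrong positions each need $O(r)$ expected improving steps, total $O(nr)$; combined this is $O(n(r+\log n))$. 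I would present whichever is shorter — the exponential-potential multiplicative-drift argument is the quickest to write.

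\textbf{Lower bound.} For $\Omega(n\log n)$ I would use a coupon-collector / \onemax-style argument: after random initialization, with high probability $\Omega(n)$ positions are incorrect, and each such position needs to be selected at least once, which by a standard argument takes $\Omega(n\log n)$ steps in expectation. For $\Omega(nr)$ I would argue that with constant probability the initial search point has some position $i$ with $d(x_i,z_i) = \Omega(r)$ (true since a uniformly random coordinate is at distance $\ge r/4$ from $z_i$ with constant probability), and since $\pm1$ moves change any single coordinate's distance by at most $1$ per step and position $i$ is touched only with probability $1/n$ per step, reducing $d_i$ from $\Omega(r)$ to $0$ requires $\Omega(r)$ improving steps at position $i$, hence $\Omega(nr)$ steps in expectation; one can also invoke Theorem~\ref{thm:multidriftlower} or just a direct additive-drift lower bound (the expected decrease of $d_i$ per step is $O(1/n)$). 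Adding the two regimes (they are not simultaneous but each is a valid lower bound) gives $\Omega(n(r+\log n))$.

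\textbf{The main obstacle} I anticipate is the bookkeeping around boundary effects of the interval metric (a position at a boundary has only one legal $\pm1$ move, changing its improvement probability from $\tfrac{1}{2n}$ to $\tfrac1n$, which actually helps, and moves that would overstep are discarded, which only helps the non-increasing-potential property) and, more substantively, getting the logarithm in the multiplicative-drift upper bound to come out as $O(r+\log n)$ rather than $O(r+\log(nr)) = O(r + \log n)$ — which is in fact fine, so the real care is only in ensuring the additive constant-savings term in the drift is handled so that $s_{\min}$ does not spoil the bound, and in making the two lower-bound regimes rigorous without double counting. None of these is deep; the exponential potential function is the one genuinely clever ingredient and it is already handed to us by the discussion preceding the theorem.
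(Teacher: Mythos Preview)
Your proposal is correct and follows essentially the same approach as the paper: the exponential potential $g(x)=\sum_i (w^{d_i}-1)$ with multiplicative drift for the upper bound (the paper uses exactly this, with $w=2$ for RLS, and the inequality $(1-\tfrac1w)w^{d_i}\ge(1-\tfrac1w)(w^{d_i}-1)$ handles your ``additive savings'' worry in one line), and coupon collector plus a simple $\Omega(nr)$ argument for the lower bound. The paper's $\Omega(nr)$ argument is marginally simpler than yours---the total fitness starts at $\Omega(nr)$ with constant probability and decreases by at most $1$ per step---but your single-coordinate version is equally valid.
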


\begin{proof}
The lower bound $\Omega(nr)$ is quite immediate: with probability $1/2$ we start in a search point of fitness at most $nr/2$ and in each step the algorithm increases the fitness by at most one. On the other hand, there is a coupon collector effect which yields the $\Omega(n \log n)$ lower bound. Indeed, it is well-known that this is the expected number of RLS iterations that we need in case of $r=2$, and larger values of $r$ will only delay optimization.

We now turn to the more interesting upper bound. Let any $r$-valued \onemax function be given with metric $d$ and target $z$. We want to employ a multiplicative drift theorem (see Theorem~\ref{thm:multidrift}). We measure the potential of a search point by the following drift function. For all $x \in \Omega = [r]^n$, let
\begin{equation}
\label{def:g}
g(x):=\sum_{i=1}^n{(w^{d(z_i,x_i)}-1)},
\end{equation}
where $w:=1+\eps$ is an arbitrary constant between $1$ and $2$. In fact, for the analysis of RLS we can simply set $w:=2$ but since we want to re-use this part in the analysis of the \oea, we prefer the more general definition here.

We regard how the potential changes on average in one iteration. Let $x$ denote the current search point and let $y$ denote the search point that we obtain from $x$ after one iteration of RLS (after selection). 
Clearly, we have that each position is equally likely to be selected for variation. When a non-optimal component $i$ is selected, then the probability that $y_i$ is closer to $z_i$ than $x_i$ is at least $1/2$, while for every already optimized component we will not accept any move of RLS (thus implying $y_i=x_i$). This shows that, abbreviating $d_i:=d(z_i,x_i)$ for all $i\in [1..n]$, and denoting by $O:=\{i \in [1..n] \mid x_i=z_i\}$ the set of already optimized bits,
\begin{align*}
\E[g(x)-g(y)\mid x]
& = \tfrac{1}{2n}\sum_{i\in [1..n]\setminus O}{\left((w^{d_i}-1) - (w^{d_i-1}-1)\right)}\\
& =
\tfrac{1}{2n}\sum_{i\in [1..n]\setminus O}{(1-\tfrac{1}{w})w^{d_i}}\\
& \geq 
\tfrac{1}{2n} (1-\tfrac{1}{w}) \sum_{i \in [1..n]}{(w^{d_i}-1)}\\
& =
\tfrac{1}{2n} (1-\tfrac{1}{w}) g(x).
\end{align*}

Furthermore, the maximal potential that a search point can obtain is at most $n w^{r}$.
Plugging all this into the multiplicative drift (see Theorem~\ref{thm:multidrift}), we see that the expected optimization time is of order at most $\ln(n w^r)/\left(\tfrac{1}{2n} (1-\tfrac{1}{w})\right) = O(n (\log(n)+r))$, as desired.
\end{proof}

For the analysis of the \oea we will proceed similarly as for RLS. To help with the added complexity, we use the following lemma.

\begin{lemma}\label{lem:costOfSets}
Let $n$ be fixed, let $q$ be a cost function on \emph{elements} of $[1..n]$ and let $c$ be a cost function on \emph{subsets} of $[1..n]$. Furthermore, let a random variable $S$ ranging over subsets of $[1..n]$ be given. Then we have
\begin{equation}
\forall T \subseteq [1..n]: c(T) \leq \sum_{i \in T} q(i) \Rightarrow E[c(S)] \leq \sum_{i = 1}^n q(i) \Pr[i \in S];
\end{equation}
and
\begin{equation}
\forall T \subseteq [1..n]: c(T) \geq \sum_{i \in T} q(i) \Rightarrow E[c(S)] \geq \sum_{i = 1}^n q(i) \Pr[i \in S].
\end{equation}
\end{lemma}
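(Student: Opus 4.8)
The plan is to prove both implications at once by the standard indicator-variable argument: pointwise domination followed by linearity of expectation. First I would assume the hypothesis of the first implication, i.e.\ $c(T) \le \sum_{i \in T} q(i)$ for every $T \subseteq [1..n]$. The key observation is that for a fixed subset $T$ one may rewrite $\sum_{i \in T} q(i) = \sum_{i=1}^n q(i)\,\mathbf{1}[i \in T]$, so the hypothesis reads $c(T) \le \sum_{i=1}^n q(i)\,\mathbf{1}[i \in T]$ for all $T$. Applying this with $T = S$ and taking expectations over the (finitely supported) random subset $S$, monotonicity of expectation gives
\[
E[c(S)] \;\le\; E\!\left[\sum_{i=1}^n q(i)\,\mathbf{1}[i \in S]\right].
\]

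Next I would expand the right-hand side by linearity of expectation and use $E[\mathbf{1}[i \in S]] = \Pr[i \in S]$, obtaining $E\big[\sum_{i=1}^n q(i)\,\mathbf{1}[i\in S]\big] = \sum_{i=1}^n q(i)\,E[\mathbf{1}[i \in S]] = \sum_{i=1}^n q(i)\,\Pr[i\in S]$, which is exactly the claimed inequality $E[c(S)] \le \sum_{i=1}^n q(i)\Pr[i \in S]$. The second implication follows from the identical computation with every ``$\le$'' replaced by ``$\ge$''; equivalently, one applies the first implication to the cost functions $-c$ and $-q$ and negates the resulting inequality.

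I do not expect any real obstacle here. The only point worth noting is that $S$ ranges over the finite power set of $[1..n]$, so all expectations are finite sums and the interchange of sum and expectation requires nothing beyond linearity of expectation. The lemma is stated in this abstract form only so that it can be invoked repeatedly in the analysis of the \oea with different concrete choices: $c(T)$ will typically be the change in potential caused by a mutation step that touches exactly the positions in $T$, and $q(i)$ a per-position bound on that change, while $\Pr[i \in S] = 1/n$ records that each position is selected independently with probability $1/n$.
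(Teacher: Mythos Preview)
Your proof is correct and is essentially the same argument as the paper's: both apply the pointwise hypothesis and then use linearity of expectation, with the paper writing out $E[c(S)] = \sum_{T}\Pr[S=T]\,c(T)$ explicitly and swapping the order of summation, while you phrase the same swap via indicator variables. The reverse direction is handled identically in both.
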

\begin{proof}
We have $E[c(S)] = \sum_{T \subseteq [n]}\Pr[S=T]c(S) 
	\leq \sum_{T \subseteq [n]}\Pr[S=T] \sum_{i \in T}^n q(i)
	= \sum_{i = 1}^n q(i) \Pr[i \in S]$.
The other direction follows analogously.
\end{proof}

The proof for the case of the \oea follows along similar lines, but is (significantly) more involved.

\begin{theorem}\label{thm:oea1}
The expected optimization time of the \oea with the $\pm 1$ variation operator is $\Theta(n(r + \log n))$ for any $r$-valued \onemax function.
\end{theorem}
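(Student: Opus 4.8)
The plan is to mirror the RLS analysis from Theorem~\ref{thm:RLS1} but account for the fact that the \oea flips each position independently with probability $1/n$, so several components may change in one step. For the \textbf{lower bound} $\Omega(n(r+\log n))$, the same two arguments as for RLS work essentially verbatim: with constant probability the random initial search point has fitness $\Omega(nr)$, and since in each iteration the expected fitness gain is $O(1)$ (the expected number of touched positions is $1$, and each contributes a bounded improvement in expectation once we note that the probability of touching $k$ positions decays geometrically), a standard additive-drift argument gives $\Omega(nr)$; the coupon-collector effect on the $\Theta(n)$ positions that must each be hit at least once gives the $\Omega(n\log n)$ term, and the maximum of the two is $\Omega(n(r+\log n))$.

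For the \textbf{upper bound}, I would reuse the potential $g(x) = \sum_{i=1}^n (w^{d(z_i,x_i)}-1)$ with $w = 1+\eps$ for a small enough constant $\eps \in (0,1)$, and aim to show multiplicative drift $E[g(x)-g(y)\mid x] \ge \frac{c}{n} g(x)$ for some constant $c>0$; combined with the bound $g(x) \le n w^r = n\cdot 2^{O(r)}$ wait — here is the first subtlety: $w^r$ is exponential in $r$, so $\ln(nw^r) = \ln n + r\ln w = \Theta(\log n + r)$, which is exactly what we want, so Theorem~\ref{thm:multidrift} yields $E[T] = O(n(\log n + r))$ \emph{provided} the drift is $\Omega(g(x)/n)$. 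Establishing this drift is the heart of the proof. The clean single-position improvement from RLS (a non-optimal component $i$ chosen, moved toward $z_i$ with probability $\ge 1/2$, contributing $(1-1/w)w^{d_i}$) still occurs: position $i$ alone is flipped toward $z_i$ with probability $(1/n)(1-1/n)^{n-1}(1/2) = \Omega(1/n)$, so \emph{if nothing else changed} we would get drift $\ge \frac{\Omega(1)}{n}(1-1/w)\sum_i(w^{d_i}-1)$. The obstacle is the loss incurred when other positions simultaneously move \emph{away} from their targets (or when a position moves away while $i$ improves), which can increase $g$.

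To control this, I would bound the expected \emph{negative} contribution. Using Lemma~\ref{lem:costOfSets} with $q(i) = $ the expected one-step change of the $i$-th summand of $g$ conditioned on $i$ being touched, times $\Pr[i \text{ touched}] = 1/n$: a touched non-optimal position $i$ changes $w^{d_i}-1$ by at most $+(w-1)w^{d_i}$ in the bad direction and by $-(1-1/w)w^{d_i}$ in the good direction, each with probability $\le 1/2$, so its expected contribution is actually nonpositive — but we must be careful, because selection couples the positions: a step is only accepted if $f(y)\le f(x)$, i.e.\ the $d$-sum does not increase, which does \emph{not} prevent $g$ from increasing. The key estimate is that the probability of touching $\ge 2$ positions is $O(1/n^2)$ times a sum over pairs, and the worst-case $g$-increase from touching positions $i,j$ is $O((w-1)(w^{d_i}+w^{d_j}))$; summing via Lemma~\ref{lem:costOfSets} this costs only $O(\frac{w-1}{n^2})\sum_i w^{d_i}\cdot n = O(\frac{w-1}{n})\sum_i w^{d_i}$, which for $w-1=\eps$ small enough is dominated by the positive $\Omega(\frac{1}{n})(1-1/w)\sum_i(w^{d_i}-1) = \Omega(\frac{\eps}{n})\sum_i w^{d_i}$ main term once we also absorb the $-1$'s (handling positions with $d_i=0$ separately, noting they contribute $0$ to $g$ and only help). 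Choosing $\eps$ a small absolute constant then gives $E[g(x)-g(y)\mid x]\ge \frac{c\eps}{n}g(x)$, and Theorem~\ref{thm:multidrift} finishes the proof. The main obstacle I anticipate is precisely this bookkeeping: showing that the simultaneous-move losses and the interaction with the selection step are lower-order compared to the single-position gain, uniformly in $r$, which forces the constant $w$ (equivalently $\eps$) to be chosen small but bounded away from $1$ — exactly why the theorem earlier in the section already flags that it wants $w\in(1,2)$ rather than $w=2$.
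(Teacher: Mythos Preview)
Your overall architecture matches the paper's: the same potential $g(x)=\sum_i(w^{d_i}-1)$, the split into single-flip events $E_1$ and multi-flip events $E_2$, and multiplicative drift to finish. The lower bound sketch is fine and essentially what the paper does. The gap is in your control of the $E_2$ contribution.

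You bound the worst-case $g$-increase when positions $i,j$ are touched by $O((w-1)(w^{d_i}+w^{d_j}))$ and arrive at a negative term of order $\frac{w-1}{n}\sum_i w^{d_i}$. But your positive $E_1$ term is $\frac{1}{2en}(1-\tfrac1w)\sum_i w^{d_i}=\frac{w-1}{2ewn}\sum_i w^{d_i}$. Both scale \emph{linearly} in $\eps=w-1$, so shrinking $\eps$ does not change their ratio; and the explicit constants go the wrong way (roughly $1/2$ versus $1/(2ew)$), so your negative bound swamps the positive one regardless of how small you take $\eps$. The claim ``for $\eps$ small enough the negative is dominated'' is therefore false as written.

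The paper's fix is a second-order cancellation you are not exploiting. Conditional on touching a set $S$ and being accepted, every ``up'' move at a position $i$ (in some outcome $y$) can be injectively matched with a ``down'' move at the same position $i$ (in another outcome $y'$ with the same $S$). The combined contribution of such a matched pair to $g$ is
\[
w^{d_i+1}-w^{d_i}+w^{d_i-1}-w^{d_i}=w^{d_i}\Bigl(w-2+\tfrac1w\Bigr)=w^{d_i}\,\frac{(w-1)^2}{w},
\]
which is $O(\eps^2)$, not $O(\eps)$. Averaging over $A(S)$ and applying Lemma~\ref{lem:costOfSets} yields $E[g(Y)-g(x)\mid E_2]\le \frac{(w-1)^2}{2wn}\sum_i w^{d_i}$. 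Now the negative term is genuinely higher order in $\eps$, and choosing any $w$ with $w-1<1/e$ makes the net drift at least $\frac{c}{n}g(x)$ for a positive constant $c$. The point you should internalise is that selection forces ups to be balanced by downs, and pairing an up with a down \emph{at the same position} turns a first-order loss into a second-order one; a crude per-position worst-case bound throws this cancellation away.
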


\begin{proof}
The lower bound $\Omega(nr)$ follows almost as for RLS: With constant probability the initial search point is $\Theta(nr)$ away from the optimum, and the expected progress towards the optimum is bounded form above by $1$. Thus, with a simple lower-bound additive drift theorem~\cite{HeY01}, the lower bound of $\Omega(nr)$ follows.

Regarding the upper bound, let any $r$-valued \onemax function be given with metric $d$ and target $z$. We want to employ multiplicative drift again. We fix some $w >1$ to be specified later. With any search point $x \in \searchSpace$ we associate a vector $d \in \realnum^n$ such that, for all $i \leq n$, $d_i = d(x_i,z_i)$. We use the same potential $g$ on $\searchSpace$ as for the analysis of RLS, that is, for all $x \in \searchSpace$,
$$
g(x) = \sum_{i=1}^n (w^{d_i} -1).
$$

Let any current search point $x \in \searchSpace$ be given and let $Y$ be the random variable describing the search point after one cycle of mutation and selection. Let $E_1$ be the event that $Y$ is obtained from $x$ by flipping exactly one bit and the result is accepted (that is, $f(Y) \le f(x)$). Let $E_2$ be the event that at least $2$ bits flip and the result is accepted. The total drift in the potential $g$ is now
$
E[g(x) - g(Y)] = E[g(x) - g(Y) \mid E_1]\Pr[E_1] + E[g(x) - g(Y) \mid E_2]\Pr[E_2].
$
We are now going to estimate $E[g(x) - g(Y) \mid E_2]$. The random variable $Y$ is completely determined by choosing a set $S \subseteq [1..n]$ of bit positions to change in $x$ and then, for each such position $i \in S$, choosing how to change it (away or towards the target $z_i$). For each choice $S \subseteq [1..n]$, let $A(S)$ be the set of all possible values for $Y$ which have the set $S$ as positions of change. For each possible $S \subseteq [1..n]$, let $Y(S)$ be the random variable $Y$ conditional on making changes exactly at the bit positions of $S$. 
Thus, we can now write the random variable $Y$ as$$
Y = \sum_{S} Y(S) \Pr[S].
$$
We are now going to estimate, for any possible $S$, 
$$
E[g(Y(S)) - g(x)].
$$
For all possible $S$, let $c(S) = E[g(Y(S)) - g(x)]$.
Let a possible $S$ be given. Note that $Y(S)$ is the uniform distribution on $A(S)$. 
For each $y \in A(S)$ and each $i \in S$, we have $d(y_i,z_i) - d_i \in \{-1,0,1\}$ (note that the case of being $0$ can only occur when $r$ is odd and we have a circle, or in other such border cases); in the case that this value is $1$ we call $(y,i)$ an \emph{up-pair}, and in case that this value is $-1$ we call this pair a \emph{down-pair}. We let $U$ be the set of all up-pairs. As we only consider accepted mutations, we have that, for all $y \in A(S)$, $\sum_{i \in S} d(y_i,z_i) - d_i \leq 0$. This implies that there are at least as many down-pairs as there are up-pairs in $A(S) \times S$. Furthermore, for any up-pair $(y,i)$ with $d_i \neq 0$ there is $y' \in A(S)$ such that $(y',i)$ is a down-pair and, for all $j \in S \setminus \{i\}$, $y'_j = y_j$. Thus, for all up-pairs $(y,i) \in U$ there is a down-pair $(\overline{y},\overline{i})$, such that the mapping $(y,i) \mapsto (\overline{y},\overline{i})$ is injective and, for all $(y,i) \in U$ with $d_i \neq 0$, $(\overline{y},\overline{i}) = (y',i)$.
Note that, for all up-pairs $(y,i)$, we have $d_i \leq d_{\overline{i}}$.

We now get, for any $(y,i) \in U$,
\begin{align*}
w^{d(y_i,z_i)} - w^{d_i} + w^{d(\overline{y}_{\overline{i}},z_{\overline{i}})} - w^{d_{\overline{i}}} 
& \leq
w^{d_i}(w - 1 + \frac{1}{w} - 1)\\ &= w^{d_i} \frac{(w-1)^2}{w}.
\end{align*}
Overall we have
\begin{eqnarray*}
c(S) 
 & = & E[g(Y(S)) - g(x)]\\
 & = & \frac{1}{|A(S)|}\sum_{y \in A(S)} g(y) - g(x)\\
 & = & \frac{1}{|A(S)|}\sum_{y \in A(S)} \sum_{i=1}^n \left( w^{d(y_i,z_i)} - w^{d_i} \right)\\
 & \leq & \frac{1}{|A(S)|}\sum_{(y,i) \in U} \left( w^{d(y_i,z_i)} - w^{d_i} + w^{d(\overline{y}_{\overline{i}},z_{\overline{i}})} - w^{d_{\overline{i}}}\right)\\
 & \leq & \frac{1}{|A(S)|}\sum_{(y,i) \in U} w^{d_i} \frac{(w-1)^2}{w}\\
 & \leq & \frac{1}{2} \sum_{i \in S} w^{d_i} \frac{(w-1)^2}{w}.
\end{eqnarray*}

Using Lemma~\ref{lem:costOfSets}, we see that
\begin{align*}
E[g(Y) - g(x) \mid E_2]
 & \leq \sum_{i=1}^n \frac{1}{n} w^{d_i} \frac{(w-1)^2}{2w}\\
 & = \frac{(w-1)^2}{2wn} \sum_{i=1}^n w^{d_i}.
\end{align*}

We use the following estimation of progress we can make with changing exactly one position.
\begin{align*}
E[g(x) - g(Y) \mid E_1]\Pr[E_1] & \geq \frac{1}{2ne}\sum_{i\in [n]}{(1-\tfrac{1}{w})w^{d_i}}\\
& = \frac{w-1}{2wne} \sum_{i=1}^n w^{d_i}.
\end{align*}

Let $w$ be any constant $>1$ such that $w-1 - e(w-1)^2 > 0$, and let $c = (w-1 - e(w-1)^2)/e$. Then we have
\begin{eqnarray*}
E[g(x) - g(Y) ]
 & \geq & \frac{w-1}{2wne}\sum_{i = 1}^n{w^{d_i}} - \frac{(w-1)^2}{2wn} \sum_{i=1}^n w^{d_i}\\
 & = & \frac{w-1 - e(w-1)^2}{2wne}\sum_{i = 1}^n{w^{d_i}}\\
 & = & \frac{c}{2wn}\sum_{i=1}^n{w^{d_i}}\\
 & \geq & \frac{c}{2wn} g(x).
\end{eqnarray*}

Again, the maximal potential that a search point can obtain is at most $n w^{r}$.
Plugging all this into the multiplicative drift (see Theorem~\ref{thm:multidrift}), we see that the expected optimization time is of order at most $\ln(n w^r)/\left(\tfrac{c}{2wn}\right) = O(n (\log(n)+r))$, as desired.
\end{proof}

\section{Harmonic Mutation Strength}
\label{sec:HarmonicStepSize}

In this section we will consider a mutation operator with variable step size. The idea is that different distances to the target value require different step sizes for rapid progress. We consider a mutation operator which, in each iteration, chooses its step size from a fixed distribution. As distribution we use what we call the \emph{harmonic distribution}, which chooses step size $j \in [1..r-1]$ with probability proportional to $1/j$. Using the bound on the harmonic number $H_{r-1} < 1 + \ln r$, we see that the probability of choosing such a $j$ is at least $1/(j (1 +\ln r))$.

\begin{theorem}\label{thm:HarmonicDistribution}
The RLS as well as the \oea with the harmonically distributed step size (described above) has an expected optimization time of $\Theta(n \log r (\log n + \log r))$ on any $r$-valued \OneMax function.
\end{theorem}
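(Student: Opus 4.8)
The plan is to prove matching upper and lower bounds of order $\Theta(n\log r(\log n+\log r))$ for both RLS and the \oea, handling the two algorithms together wherever possible since the harmonic step operator is the same in both. For the \emph{upper bound}, I would use the variable-drift theorem (Theorem~\ref{thm:variabledrift}) with the fitness $f(x)=\sum_{i=1}^n d(x_i,z_i)$ itself as the potential, exactly as in the proof of Theorem~\ref{thm:eauniformupper}. The key elementary estimate is a lower bound on the single-component drift: if position $i$ is the unique position touched by mutation (probability $\ge 1/(en)$ for the \oea, exactly $1/n$ for RLS) and $d_i=d(x_i,z_i)>0$, then with probability $\ge 1/(d_i(1+\ln r))$ the harmonic operator picks step size exactly $d_i$ in the correct direction, reducing $d_i$ to $0$ and contributing $d_i$ to the potential decrease; hence the conditional drift from position $i$ is $\ge d_i\cdot\frac{1}{d_i(1+\ln r)}=\frac{1}{1+\ln r}$. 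Summing over $i$ and multiplying by the probability $1/n$ (resp.\ $1/(en)$) that exactly that position is hit, and using $\sum_i [d_i>0]\ge f(x)/(r-1)$ in the worst case, gives $\Delta\ge c\,f(x)/(n(r-1)\log r)$ for a constant $c$ — but this crude bound only yields $O(n r\log r\cdot\log(\text{something}))$, which is too weak. To get the polylogarithmic dependence on $r$ one must instead keep the stronger bound $\Delta\ge \frac{c}{n\log r}\cdot\#\{i:d_i>0\}$ and combine it with the observation that halving (or otherwise significantly decreasing) a large $d_i$ also has probability $\Omega(1/\log r)$: for any $d_i$, the harmonic operator makes a jump that at least halves $d_i$ with probability $\Omega(1/\log r)$, so the drift from a component at distance $d_i$ is $\Omega(d_i/\log r)$, giving $\Delta \ge \frac{c}{n\log r} f(x)$. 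Feeding $h(s)=\frac{cs}{n\log r}$ into Theorem~\ref{thm:variabledrift} yields $E[T]=O\big(\frac{n\log r}{c}\ln(s_0)\big)$ with $s_0\le n(r-1)$, i.e.\ $O(n\log r\,(\log n+\log r))$, as desired.

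For the \emph{lower bound} I would again follow the template of Section~\ref{sec:lowerBoundDriftTheorem}: use a potential that measures how close we are to the optimum and show it admits only small (multiplicative or variable) drift, then invoke a lower-bound drift theorem — Theorem~\ref{thm:multidriftlower} for RLS (where the process is monotone) and Theorem~\ref{thm:newdrift} for the \oea (where it need not be). The natural potential is something like $g(x)=\sum_i \lceil \log_2(1+d(x_i,z_i))\rceil$ or simply $\sum_i \log(1+d_i)$, which captures that a single component at distance $d_i$ needs $\Omega(\log d_i)$ ``halving'' steps in expectation to be fully corrected, each such step costing $\Omega(\log r)$ iterations because the harmonic distribution spreads probability $\Theta(1/\log r)$ over the useful step sizes. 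One must check: (i) the drift of $g$ in one step is $O\big(g(x)/(n\log r)\big)$ — the dominant contribution is from the $\Theta(1/(n\log r))$-probability event of making a constant-factor improvement in some non-optimal component, contributing $O(1)$ to $g$; large jumps are rare; (ii) the ``no large jump'' condition of the drift theorem, using a Chernoff-type bound on the number of components simultaneously improved by a large factor (for the \oea), analogous to Lemma~\ref{lem:beta}. With $s_{\aim}$ polylogarithmic and initial $g$-value $\Theta(n\log r)$ (since a random start has $\Theta(n)$ components at distance $\Theta(r)$), the drift theorems give $E[T]=\Omega\big(n\log r\cdot(\log n+\log r)\big)$.

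The main obstacle, I expect, is getting the \emph{upper bound} tight in its $r$-dependence: the naive single-target-hit argument loses a factor $r$, so the heart of the proof is the observation that the harmonic distribution, precisely because its weights decay like $1/j$, still assigns total probability $\Omega(1/\log r)$ to the set of step sizes in, say, $[d_i/2,\,d_i]$, and that landing in this range decreases the component's distance by a constant factor regardless of how large $d_i$ is. Formalizing "a constant-factor decrease of $d_i$ contributes $\Omega(\log(1+d_i))$ progress over its lifetime" cleanly — either via a clever choice of potential that is concave in $d_i$ (so that a multiplicative decrease of $d_i$ is an additive decrease of the potential) and then applying multiplicative drift, or via a direct variable-drift computation on $f$ — is the crux. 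A secondary technical nuisance is the boundary handling for the interval metric (overstepping jumps are discarded), which can only reduce the success probability of a useful jump by at most a factor $2$ and so does not affect the asymptotics; I would note this once and not belabor it. For the \oea lower bound one additionally needs the Markov/monotonicity machinery of Theorem~\ref{thm:newdrift} and Corollary~\ref{cor:newdrift} to discard the faster initial phase, exactly as in the proof of Theorem~\ref{thm:eauniformlower}.
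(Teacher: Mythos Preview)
Your upper bound is correct and is essentially the paper's argument: multiplicative drift on the fitness $f$ with $\delta=\Theta(1/(n\log r))$. The paper's computation is slightly more direct than your halving heuristic --- it simply notes $\Pr[A_{i,j}]\ge \frac{1}{2enj(1+\ln r)}$ for the event of improving position $i$ by exactly $j$, and sums $\sum_{j=1}^{d_i} j\cdot\Pr[A_{i,j}]\ge \frac{d_i}{2en(1+\ln r)}$ --- but both routes reach $\Delta\ge \frac{c}{n\log r}f(x)$ and hence $O(n\log r(\log n+\log r))$ via Theorem~\ref{thm:multidrift}.

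Your lower-bound plan has a real gap. With $g(x)=\sum_i\ln(1+d_i)$ and the bound you state, $E[g(x)-g(Y)]=O(g(x)/(n\log r))$, the tight uniform $\delta$ really is $\Theta(1/(n\log r))$: already a state with a single component at distance~$1$ has $g=\ln 2$ and drift $\Theta(1/(n\log r))$, so the ratio there is $\Theta(1/(n\log r))$ and you cannot take $\delta$ smaller. Plugging $s_0=\Theta(n\log r)$ and this $\delta$ into Theorem~\ref{thm:multidriftlower} or~\ref{thm:newdrift} yields only
\[
E[T]\ \ge\ \Omega\!\Big(\tfrac{\ln s_0}{\delta}\Big)\ =\ \Omega\big(n\log r\,(\log n+\log\log r)\big),
\]
which misses the $\Omega(n(\log r)^2)$ term whenever $\log r\gg\log n$. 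The arithmetic in your last sentence is simply wrong: $\ln(s_0)$ contributes $\log n+\log\log r$, not $\log n+\log r$.

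The paper gets the full lower bound by splitting it in two. A coupon-collector argument gives $\Omega(n\log r\log n)$, since each position must at some point receive the exactly correct step, an event of probability $O(1/(n\log r))$. For the $\Omega(n(\log r)^2)$ part the paper uses \emph{additive} (not multiplicative) drift on $h(x)=\ln(1+d(x_1,z_1))$, the log-distance of a single fixed component: $h$ starts at $\Omega(\log r)$ with constant probability, and $E[(h(x)-h(x'))_+]\le K/(n\ln r)$, so additive drift gives $E[T]\ge \Omega(\log r)\cdot n\log r$. The switch from multiplicative to additive drift on the log scale is exactly what recovers the second $\log r$ factor. Note also that neither Theorem~\ref{thm:multidriftlower} nor Theorem~\ref{thm:newdrift} is invoked here at all; the heavy machinery you propose for the \oea lower bound is unnecessary, because bounding $(h(x)-h(x'))_+$ before selection already dominates the post-selection drift.
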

\begin{proof}
We first show the upper bound by considering drift on the fitness. Let any $x \in \searchSpace$ be given, let $Y$ be the random variable describing the best individual of the next iteration and let $A_{i,j}$ be the event that $Y$ differs from $x$ in exactly bit position $i$ and this bit position is now $j$ closer to the optimum. Note that, for both RLS and the \oea, we get $\Pr[A_{i,j}] \geq \frac{1}{2en j (1 +\ln r)}$. We have
\begin{align*}
\E[f(x) &- f(Y)]
  \geq  \sum_{i=1}^n \sum_{j=1}^{d_i} \E[f(x) - f(Y) \mid A_{i,j}]\Pr[A_{i,j}]\\
 & =     \sum_{i=1}^n \sum_{j=1}^{d_i} j \Pr[A_{i,j}]
  \geq  \sum_{i=1}^n \sum_{j=1}^{d_i} \frac{j}{2en j (1 +\ln r)}\\
 & =     \sum_{i=1}^n \frac{d_i}{2en (1 +\ln r)}
  =     \frac{1}{2en (1 +\ln r)} f(x).
\end{align*}
As the initial fitness is less than $rn$, the multiplicative drift theorem (see Theorem~\ref{thm:multidrift}) gives us the desired total optimization time.

Now we turn to the lower bound. A straightforward coupon collector argument gives us the lower bound of $\Omega(n \log r \log n)$, since each position has to change from incorrect to correct at some point, and that mutation has a probability of $O(1/(n\log r))$. It remains to show a lower bound of $\Omega(n (\log r)^2)$. To this end, let $f$ be any $r$-values \onemax function and $x^*$ its optimum. Let $g(x) = d(x_1,x^*_1)$ be the distance of the first position to the optimal value in the first position. Let $h(x) = \ln(g(x)+1)$. Let $x'$ be the outcome of one mutation step and $x''$ be the outcome of selection from $\{x,x'\}$. We easily compute $E[\max\{0,h(x)-h(x')\}] \le \frac{K}{n \ln r}$ for some absolute constant $K$. Consequently, $E[h(x) - h(x'')] \le \frac{K}{n \ln r}$ as well. For the random initial search point, we have $g(x) \ge r/2$ with constant probability, that is, $h(x) = \Omega(\log r)$ with constant probability. Consequently, the additive drift theorem gives that the first time $T$ at which $h(x)=0$, satisfies $E[T] \ge \Omega(\log r) / \frac{K}{n \ln r} = \Omega(n \log^2 r)$. 
\ignore{ Let any $x \in \searchSpace$ be given, let $Y$ be the random variable describing the best individual of the next iteration and let $A_{j}$ be the event that $Y$ differs from $x$ in bit position $1$, and this bit position is now $j$ closer to the optimum. There are four ways to be $j$ closer to the optimum: going $j$ steps towards the optimum; overshooting the optimum up to a position $j$ closer to the optimum; going away from the optimum until the circular setting of the space leads you $j$ closer to the optimum; and finally going away and overshooting the appropriate number of steps. Of these four possibilities, the first one is more likely than any of the others since longer are less likely than shorter jumps. Thus, there is a constant $c$ such that, for both RLS and the \oea, we get $\Pr[A_{j}] \leq \frac{c}{n j (1 +\ln r)}$. We have
\begin{eqnarray*}
\E[g(x) - g(Y)]
 & \leq & \sum_{j=1}^{g(x)} \E [g(x) - g(Y) \mid A_{j}]\Pr[A_{j}]\\
 & =    & \sum_{j=1}^{g(x)} j \Pr[A_{j}]\\
 & \leq & \sum_{j=1}^{g(x)} \frac{cj}{n j (1 +\ln r)}\\
 & =    & \frac{c}{n (1 +\ln r)} g(x).
\end{eqnarray*}
We want to use this observation to apply  the lower bound multiplicative drift theorem (Theorem~\ref{thm:newdrift}). Note that we did not consider any events that lead the process further away from the goal. Furthermore, we need to bound the expectation of $s-g(Y)$, for $s \leq g(x)$, without considering events that lead away fro the goal. That we get the necessary bounds follows analogously from the above calculation, as larger steps are less likely than shorter.

As the initial $g$-value is $\Omega(r)$,  \todo{check} gives us the desired total optimization time.}
\end{proof}

In the same way as we showed the additive drift statement $E[h(x) - h(x'')] = O(1 / n \log r)$, we could have shown a multiplicative drift statement for $g$, namely $E[g(x) - g(x'')] = O(g(x) / n \log r)$; in fact, the latter is implied by the former. Unfortunately, due to the presence of large jumps -- we have $\Pr[g(x'') \le g(x)/2] = \Theta(1 / n \log r)$ --, we cannot exploit this via the lower bound multiplicative drift theorem.

Naturally, the question arises whether the $O((\log r)^2)$ dependence on $r$ can be improved. In particular, one wonders whether drawing the step size from the Harmonic distribution is optimal, or whether another distribution gives a better optimization time. This is exactly the problem considered in \cite{Dit-Row-Weg-Woe:j:10}, where the following result is presented, which could also be used to derive the run time bound of Theorem~\ref{thm:HarmonicDistribution}.

\begin{theorem}[{\cite{Dit-Row-Weg-Woe:j:10}}]
\label{thm:DIT}
Let a random process on $A = \{0, \ldots, r\}$ be given, representing the movement of a token. Fix a probability distribution of step sizes $D$ over $\{1, \ldots, r\}$. Initially, the token is placed on a random position in $A$. In round $t$, a random step size $d$ is chosen according to $D$. If the token is in position $x \geq d$, then it is moved to position $x - d$, otherwise it stays put. Let $T_D$ be the number of rounds until the token reaches position $0$. Then $\min_D (E[T_D]) = \Theta((\log r)^2)$.
\end{theorem}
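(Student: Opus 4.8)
The plan is to prove the two halves $\min_D\E[T_D]=O((\log r)^2)$ and $\min_D\E[T_D]=\Omega((\log r)^2)$ separately; the first is routine drift analysis, while the second is the substantial part.

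For the upper bound it suffices to exhibit one distribution achieving $O((\log r)^2)$, and the natural choice is the harmonic distribution $D$ with $D(j)=(jH_r)^{-1}$, where $H_r=\sum_{j=1}^r 1/j\le 1+\ln r$. Take the current position itself as potential. A token at position $s$ moves to $s-d$ exactly when the sampled step size satisfies $d\le s$, so
\begin{align*}
\E[s-X^{(t+1)}\mid X^{(t)}=s]=\sum_{j=1}^{s}D(j)\,j=\frac{s}{H_r}\ge\frac{s}{1+\ln r},
\end{align*}
that is, $\E[X^{(t+1)}\mid X^{(t)}=s]\le(1-\delta)s$ with $\delta=(1+\ln r)^{-1}$. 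Since the start position is an integer at most $r$, the multiplicative drift theorem (Theorem~\ref{thm:multidrift}) gives $\E[T_D]\le\delta^{-1}(\ln r+1)=O((\log r)^2)$; this is exactly the $n=1$ case of the upper bound in Theorem~\ref{thm:HarmonicDistribution}.

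For the lower bound we must show that \emph{every} distribution $D$ forces expected time $\Omega((\log r)^2)$; we may assume $\E[T_D]<\infty$, since otherwise there is nothing to prove. Fix $D$, put $K:=\lfloor\log_2 r\rfloor$, and condition on the initial position being at least $r/2$, which has probability at least $1/2$. For $k=1,\dots,K-1$ let $\Delta_k$ be the number of steps taken while the token's position lies in the band $[2^{k-1},2^k)$; since the position is non-increasing these bands are entered in decreasing order of $k$ and, being pairwise disjoint, satisfy $\sum_{k=1}^{K-1}\Delta_k\le T$. The core estimate I would establish is that, for the ``local mass'' $p_k:=\Pr_{D}[\,2^{k-2}\le d<2^{k+1}\,]$, one has $\E[\Delta_k]=\Omega(1/p_k)$. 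The intuition is a displacement-budget argument: while inside a band of width $2^{k-1}$ the token can travel downward in total at most $2^{k-1}$, so it can afford only $O(1)$ steps of size at least $2^{k-2}$ before leaving; a step of size at least $2^{k-1}$ always leaves the band, while smaller steps leave only from near its lower boundary. Hence, up to an $O(1)$-bounded number of large steps, each step keeps the token in the band unless it draws a size from a set of $D$-mass $O(p_k)$, giving the claimed $\Omega(1/p_k)$ lower bound on the time spent in the band. Since every step size lies in at most three of the intervals $[2^{k-2},2^{k+1})$, we have $\sum_{k=1}^{K-1}p_k\le 3$, and the AM--HM (Cauchy--Schwarz) inequality completes the proof:
\begin{align*}
\E[T]\;\ge\;\sum_{k=1}^{K-1}\E[\Delta_k]\;\ge\;c\sum_{k=1}^{K-1}\frac1{p_k}\;\ge\;c\,\frac{(K-1)^2}{\sum_{k=1}^{K-1}p_k}\;\ge\;\frac{c}{3}(K-1)^2\;=\;\Omega((\log r)^2).
\end{align*}

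The main obstacle is exactly the core estimate $\E[\Delta_k]=\Omega(1/p_k)$: it is not obvious that crossing scale $k$ genuinely costs $\Omega(1/p_k)$ steps, because a single large jump from a much higher position could drop the token straight through band $k$ (and through several further bands) essentially for free, without that cost being charged to scale $k$. Resolving this needs a consistent accounting of such jumps; I expect to do it either by restricting attention to a geometric subsequence of $\Theta(\log r)$ scales spaced by a large constant factor --- so that any jump crossing two selected scales is itself a controlled, rare event --- or by tracking, band by band, the downward-displacement budget $2^{k-1}$ together with a renewal/Wald-type estimate for the number of small steps. The delicate point throughout is keeping $\sum_k p_k=O(1)$ while doing this bookkeeping, since that normalization is what makes the AM--HM step deliver the quadratic bound rather than merely a linear one.
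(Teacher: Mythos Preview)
The paper does not prove this theorem at all; it is quoted verbatim from \cite{Dit-Row-Weg-Woe:j:10} as an external result (note the bracketed citation in the theorem header and the surrounding prose ``where the following result is presented''). There is therefore no ``paper's own proof'' to compare against.

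On the content of your proposal: the upper bound half is correct and is exactly the $n=1$ instance of the multiplicative-drift computation already carried out in the proof of Theorem~\ref{thm:HarmonicDistribution}. For the lower bound, the dyadic-band plus Cauchy--Schwarz architecture is a reasonable line of attack, but the core estimate $\E[\Delta_k]=\Omega(1/p_k)$ is genuinely unproven, and you correctly flag the obstruction yourself: a large step from far above can carry the token through band $k$ without ever landing in it, so $\Delta_k=0$ while $p_k$ may be tiny. Your two suggested patches (thinning the scales, or a displacement-budget/Wald argument) are stated only as intentions, and it is not clear either one closes the gap while preserving the $\sum_k p_k=O(1)$ normalisation you need for the AM--HM step. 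One structural feature of the process that you do not explicitly exploit, and which is the lever the original proof uses, is that a step of size $d>x$ leaves the token in place: once the token is below $2^k$, \emph{all} step sizes $\ge 2^k$ are useless, so the relevant mass at scale $k$ is the cumulative $\Pr[d<2^k]$ rather than the local $p_k$. Reworking the argument around these cumulative masses (and then summing the resulting waiting-time lower bounds) is what makes the proof go through in \cite{Dit-Row-Weg-Woe:j:10}; as written, your band estimate is still a gap rather than a proof.
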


While our processes have a slightly different behavior (including the possibility to overshoot the goal), we believe that these differences only lead to to differences in the constants of the optimization time. Thus, the above theorem indicates that the Harmonic distribution is an optimal choice and cannot be improved.

\section{Conclusion}

While many analyses of randomized search heuristics focus on the behavior of the algorithm in dependence of a large and growing dimension, we additionally considered a growing size of the search space in each dimension. We considered the \oea with three different mutation strengths and proved asymptotically tight optimization times for a variety of \onemax-type test functions over an alphabet of size~$r$. We proved that both using large changes (change to uniformly chosen different value) or very local changes (change value by $\pm 1$) leads to relatively slow (essentially linear in $r$) optimization times of $\Theta(rn\log n)$ and $\Theta(n (r + \log n))$, respectively. 

We then considered a variable step size operator which allows for both large and small steps with reasonable probability; this leads to an optimization time of $\Theta(n\log r(\log n + \log r))$. Note that this bound, while polylogarithmic in $r$, is worse than the bound of $\Theta(n (r + \log n))$ for the $\pm 1$ operator when $r$ is asymptotically smaller than $\log n \log\log n$. This shows that there is no uniform superior mutation operator among the three proposed operators.

\subsection*{Acknowledgments}
This research benefited from the support 
of the ``FMJH Program Gaspard Monge in optimization and operation research'', 
and from the support to this program from EDF (\'Electricit\'e de France).

\end{document}